\setlist{leftmargin=*}
\newcommand{\enumlabel}[1]{\textbf{#1.\arabic*}}
\newcommand{\enumref}[1]{{#1.\arabic*}}
\algnewcommand{\algorithmicbreak}{\textbf{break}}
\algnewcommand{\algorithmicor}{ \textbf{or} }
\crefname{section}{\S}{\S\S}
\Crefname{section}{\S}{\S\S}
\renewcommand{\maketitle}{\bgroup\setlength{\parindent}{0pt}
	\begin{flushleft}
		\textbf{\LARGE \@title}\vspace*{1.5em}
		
		\textbf{\@author}
	\end{flushleft}\egroup
}
\else \chead{\@title} \fi
\newtheorem{definition}{Definition}
\newtheorem{lemma}{Lemma}
\newtheorem{proposition}{Proposition}
\newtheorem{theorem}{Theorem}
\newtheorem{remark}{Remark}
\newcounter{savefootnote}
\newcounter{symfootnote}
\newcommand{\symfootnote}[1]{%
	\setcounter{savefootnote}{\value{footnote}}%
	\setcounter{footnote}{\value{symfootnote}}%
	\ifnum\value{footnote}>8\setcounter{footnote}{0}\fi%
	\let\oldthefootnote=\thefootnote%
	\renewcommand{\thefootnote}{\fnsymbol{footnote}}%
	\footnote{#1}%
	\let\thefootnote=\oldthefootnote%
	\setcounter{symfootnote}{\value{footnote}}%
	\setcounter{footnote}{\value{savefootnote}}%
}
\newcommand{\calB}{{\mathcal{B}}}
\newcommand{\calC}{{\mathcal{C}}}
\newcommand{\calE}{{\mathcal{E}}}
\newcommand{\calX}{{\mathcal{X}}}
\newcommand{\calS}{{\mathcal{S}}}
\newcommand{\calL}{{\mathcal{L}}}
\newcommand{\rr}{{\mathbb R}}
\newcommand{\rnn}{{\mathbb R_{\ge0}}}
\newcommand{\rp}{{\mathbb R}_{>0}}
\newcommand{\cc}[1]{{\mathcal C^{#1}}}
\newcommand{\st}{\mathop{\, | \,}\nolimits}
\newcommand{\argmin}{\mathop{\rm argmin}}
\newcommand{\sign}{\mathop{\rm sgn}\nolimits}
\newcommand{\tr}{\mathop{\rm tr}\nolimits}
\newcommand{\dom}{\mathop{\rm dom}\nolimits}
\newcommand{\epilim}{\mathop{\rm \textrm{e--}\lim}\nolimits}
\newcommand{\gradn}{\mathop{\nabla}\nolimits}
\newcommand{\hessn}{\mathop{\nabla^2}\nolimits}
\newcommand{\thirdn}{\mathop{\nabla^3}\nolimits}
\newcommand{\extended}[1]{\mathop{#1\colon\rr^p\to \rr \cup \{+\infty\}}\nolimits}
\newcommand{\infconv}[2]{\mathop{#1 \square #2}\nolimits}
\newcommand{\pclsc}[1]{\mathop{\Gamma_0(#1)}\nolimits}
\newcommand{\cf}{{cf.}}
\newcommand{\eg}{{e.g.}}
\newcommand{\ie}{{i.e.}}
\newcommand{\Jaug}{{\hat J}}
\newcommand{\Qaug}{{\hat Q}}
\newcommand{\eaug}{{\hat e}}
\newcommand{\Gaug}{{\hat G}}
\newcommand{\smin}[1]{{\sigma_{\min}(#1)}}
\newcommand{\smax}[1]{{\sigma_{\max}(#1)}}
\def\RF{{\rm RF}}
\newenvironment{proof*}[1][\proofname]{\par
	\pushQED{\qed}%
	\normalfont \partopsep=\z@skip \topsep=\z@skip
	\trivlist
	\item[\hskip\labelsep
	\itshape
	#1\@addpunct{.}]\ignorespaces
}{%
	\popQED\endtrivlist\@endpefalse
}
\newcommand{\tPhi}{{\tilde{\Phi}}}
\newcommand{\tG}{{\tilde{G}}}
\title{Regularized Gauss-Newton for Optimizing Overparameterized Neural Networks\par}
\author
{Adeyemi D. Adeoye~$^{1}$\symfootnote{Email address: adeyemi.adeoye@imtlucca.it}, Philipp Petersen~$^{2}$, Alberto Bemporad~$^{1}$
\\
\vspace{1em}
\normalfont{\small $^{1}$IMT School for Advanced Studies Lucca, Italy\\
$^{2}$Faculty of Mathematics, University of Vienna, Austria}\\
}
\begin{document}
\setcounter{footnote}{0}
\maketitle
\thispagestyle{firstpagestyle}

\begin{abstract}
The generalized Gauss-Newton (GGN) optimization method incorporates curvature estimates into its solution steps, and provides a good approximation to the Newton method for large-scale optimization problems. GGN has been found particularly interesting for practical training of deep neural networks, not only for its impressive convergence speed, but also for its close relation with neural tangent kernel regression, which is central to recent studies that aim to understand the optimization and generalization properties of neural networks. This work studies a GGN method for optimizing a two-layer neural network with explicit regularization. In particular, we consider a class of generalized self-concordant (GSC) functions that provide smooth approximations to commonly-used penalty terms in the objective function of the optimization problem. This approach provides an adaptive learning rate selection technique that requires little to no tuning for optimal performance. We study the convergence of the two-layer neural network, considered to be overparameterized, in the optimization loop of the resulting GGN method for a given scaling of the network parameters. Our numerical experiments highlight specific aspects of GSC regularization that help to improve generalization of the optimized neural network. The code to reproduce the experimental results is available at \url{https://github.com/adeyemiadeoye/ggn-score-nn}.
\end{abstract}

\section{Introduction}
Despite their superior convergence rates compared to first-order methods, (approximate) second-order methods are still rarely used --- and as such, underexplored --- for training large-scale machine learning and neural network (NN) models. This is due to their highly prohibitive computations and memory footprints at each iteration. Some past and recent works have, however, made efforts to reduce this overhead by proposing different approximations to the Hessian of the loss function, which the methods ultimately exploit to achieve their impressive convergence properties (see \eg, \cite{roux2007topmoumoute, martens2010deep, vinyals2012krylov, martens2015optimizing, botev2017practical, arbel2019kernelized, ren2019efficient, cai2019gram, adeoye2023score}).

One of the most appealing approximations to the Hessian matrix within the context of practical deep learning and nonlinear optimization in general is the generalized Gauss-Newton (GGN) approximation of \cite{schraudolph2002fast}, which uses a positive semi-definite (PSD) matrix to model the curvature about an arbitrary convex loss function. In fact, the Fisher information matrix (FIM) --- a curvature approximating matrix which most other approximate second-order methods seek to estimate --- is shown to have direct connections with the GGN matrix in many practical cases \citep{martens2015optimizing, pascanu2013revisiting}. Despite its close connection with the GGN matrix, the FIM, unlike the GGN matrix, potentially leads to over-approximating the second-order terms in more general loss functions, throwing away relevant curvature information \citep{schraudolph2002fast}. In addition to the desirable property of maintaining positive-definiteness throughout the training procedure, other nice properties of the GGN matrix, in comparison with the Hessian matrix, are discussed in \cite[Section 8.1]{martens2020new}; see also \cite{chen2011hessian} for discussions in the context of nonlinear least-squares estimation and \cite{bemporad2023training} for efficient training of (deep) recurrent neural networks with a GGN approach.

Towards understanding the theoretical working of deep neural networks, a line of work \citep{daniely2017sgd, li2018learning, du2018gradient, du2019gradient, allen2019convergence, zou2018stochastic, jacot2018neural, arora2019fine, chizat2019lazy, yang2019scaling} attributes their optimization and generalization success in many applications to their immense overparameterization, that is, the property of having way more parameters than the number of data points they are being trained on. These generalization properties of the NN are known to have connections with the implicit regularization of the overparameterized NN by the gradient descent (GD) method \citep{zhang2021understanding, li2018algorithmic, gunasekar2018characterizing, ji2019implicit, allen2019learning, chizat2020implicit, li2021happens, tarmoun2021understanding, berner2021modern}. For the (generalized) Gauss-Newton and its related FIM (or \emph{natural gradient}), some recent works \citep{cai2019gram, zhang2019fast, karakida2020understanding, kerekes2021depth, garcia2022fisher, arbel2023rethinking} have shown similar approximation properties and global convergence in the overparameterized regime, also mostly attributing generalization to the implicit regularization effect of the Gauss-Newton via the NTK \citep{cai2019gram, zhang2019fast, karakida2020understanding, kerekes2021depth, garcia2022fisher} and the mean-field \citep{arbel2023rethinking}.

In many of the works showing the implicit regularization effect of gradient-based optimizers, it is suggested that explicit regularizers are not needed at all in order to see the impressive generalization results. However, recent works such as \cite{wei2019regularization,raj2021explicit,orvieto2023explicit} argue that explicit regularization of the network indeed matters and should at least be given as much attention, both from a generalization and an optimization point of view. In particular, \cite{wei2019regularization} proved an approximation bound (in number of samples), via the lens of \emph{margin theory}, for an infinite-width one-hidden-layer NN \emph{weakly-regularized} by the $\ell_2$-norm, which significantly improves upon other results that rely on the NTK and/or implicit regularization formalism. In addition, they proved a global polynomial convergence rate for the noisy gradient descent, an improvement over related works that similarly study NN optimization in the infinite-width limit. In \cite{cai2019gram}, the interpretation of the GGN updates as an explicit solution of the NTK regression is used to prove a global linear convergence in the mini-batch setting. Apart from the explicit addition of a regularization term to the objective function, explicit regularization is also induced in other forms \cite{hanson1988comparing, yao2007early, srivastava2014dropout, rudi2015less, raj2021explicit, orvieto2023explicit}.

In this paper, we study the optimization of a one-hidden-layer NN by the GGN method, and by drawing inspiration from their performance in convex optimization, we consider explicit self-concordant regularization of the GGN. To the best of our knowledge, our convergence result is the first in this kind of setting: optimization of an explicitly regularized NN by the GGN method in the overparameterized regime. The structure of the class of regularization functions considered not only helps to control the local rate of change of their second derivatives \citep{owen2013self}, but is also used in the selection of adaptive learning rates. Unlike \cite{wei2019regularization}, we do not assume an arbitrarily \emph{weak} regularization under our setting; instead the smoothing framework covered by our study allows to choose a regularization strength which may depend only on the initialization of the NN, and is characterized by a smoothing parameter. However, for a proper choice of the regularization strength, the final trained NN model can be made to be reasonably small and ``simple" in spite of the overparameterization, and we can have a tradeoff between test error and training error.

\subsection{Notation} The standard Euclidean norm is denoted by $\norm{\cdot}$ or $\norm{\cdot}_2$ and the $1$-norm by $\norm{\cdot}_1$. We denote the standard inner product between two vectors by $\left<\cdot,\cdot\right>$, \ie,  $\left<x,y\right> \triangleq x^\top y$ for $x,y\in\rr^p$. For a positive integer $m$, we define $[m]\triangleq \{1,2,\ldots,m\}$. We let $\rnn$ and $\rp$ denote the set of nonnegative and positive real numbers, respectively. For an extended real-valued function $\extended{g}$, we denote by $\dom{g} \triangleq \{x\in \rr^p \st g(x) < +\infty \}$ the \emph{(effective) domain} of $g$. $\pclsc{\calX}$ denotes the set of proper convex lower-semicontinuous (lsc) functions from $\calX \subseteq \rr^p$ to $\rr \cup \{+\infty\}$. We denote by $\calC^k(\rr^p)$, the class of $k$-times continuously-differentiable functions on $\rr^p$, $k\in\rnn$. For $g\in \cc{3}(\dom{g})$, we let $g'(t)$, $g''(t)$ and $g'''(t)$ denote the first, second, and third derivatives of $g$, at $t\in \rr$, respectively. The gradient, Hessian, and third-order derivative tensor of $g$ at $x\in \rr^p$ are respectively written as $\gradn _x g(x)$, $\hessn _x g(x)$, and $\thirdn _x g(x)$. We omit the subscripts if the variables with respect to which the derivatives are taken are clear from the context. For a symmetric matrix $H\in\rr^{p\times p}$, we write $H\succ 0$ (resp. $H\succeq 0$) to say $H$ is positive definite (resp., positive semidefinite). We let $\lambda_1(H)$ denote the maximum eigenvalue of a matrix $H\in\rr^{p\times p}$, and $\lambda_p(H)$ its minimum eigenvalue; $\tr(H)$ denotes the trace of $H$. The scalars $\smax{A}$ and $\smin{A}$ respectively denote the maximum and minimum singular values of an $m\times p$ matrix $A$. Given that $\hessn g(x) \succ 0$, the \emph{local} norm $\norm{\cdot}_x$ with respect to $g$ at $x$ is the weighted norm induced by $\hessn g(x)$, \ie, $\norm{d}_x\triangleq \left<\hessn g(x)d,d\right>^{1/2}$. The dual norm is $\norm{v}_x^*\triangleq \left<\hessn g(x)^{-1}v,v\right>^{1/2}$. We also define the notations $\norm{x}_H \triangleq \left<Hx, x\right>^{\frac{1}{2}}$, $\norm{x}_H^* \triangleq \left<H^{-1}x, x\right>^{\frac{1}{2}}$, for $H \succ 0, x\in \rr^p$. An Euclidean ball of radius $r$ centered at $\bar{x}$ is denoted by $\calB_r(\bar{x}) \triangleq \{x \in \rr^p \st \norm{x - \bar{x}} \le r \}$. The (Dikin) ellipsoid of radius $r$ centered at $\bar{x}$ is defined by $\calE_r(\bar{x})\triangleq \{x\in \rr^p \st \norm{x-\bar{x}}_H < r\}$, for $H \succ 0$. We define set convergence in the sense of Painlev\'{e}-Kuratowski \cite[Chapter 4]{rockafellar2009variational}. Given $\{g_t\}_{k\in\rnn}$ with $g_t\colon \rr^p \to \rr \cup \{-\infty, +\infty\}$, $\epilim g_t=g$ denotes the epigraphic convergence (\emph{epi-convergence}) of $\{g_t\}_{k\in\rnn}$ to a function $g\colon \rr^p \to \rr \cup \{-\infty, +\infty\}$.
\section{GGN for Learning Neural Networks}\label{ss:ggn-settings}
Given the sequence of data points $S \triangleq \{(x_i,y_i)\}_{i\in [m]}$ with $x_i \in \rr^{n_0}, y_i\in \rr^{n_L}$, an $L$-layer fully-connected feedforward NN is defined as follows. Starting with an input $z_0\in \rr^{n_0\times m}$, and for $l=1,\ldots,L$,
\begin{align}
	a_l = W^{(l)} z_{l-1} + b^{(l)}, \quad z_l = \varrho_l(a_l),
\end{align}
where $W^{(l)}\in \rr^{n_l \times n_{l-1}}$ and $b^{(l)} \in \rr^{n_l}$ are the $l$-th layer weights and biases of the network, respectively; each $\varrho_l\colon \rr \to \rr$ is an element-wise activation function. Let $\Phi(\cdot;\theta) \eqqcolon z_L$ be the output of the NN, where $\theta = [\theta_1,\theta_2, \ldots, \theta_L]^\top \in \rr^p$ with $\theta_l \triangleq vec([W^{(l)}~b^{(l)}])$, the stacked vectorization of $W^{(l)}$ and $b^{(l)}$. In the \emph{supervised learning} task, we look for the parameter vector $\theta$ minimizing the regularized empirical risk
\begin{align}\label{eq:reg-prob}
	\min\limits_{\theta \in \rr^p} \calL(\theta) \triangleq \hat{R}_s(\Phi) + g(\theta), \qquad \hat{R}_s(\Phi) \triangleq \frac{1}{m}\sum_{i=1}^{m}\ell(\Phi(x_i;\theta),y_i),
\end{align}
where $\hat{R}_s(\Phi)$ is the empirical risk associated with the NN learning task, $\ell \colon \rr^{n_L} \times \rr^{n_L} \to \rr$ is a loss function, and $g\colon \rr^p \to \rr$ is a regularization function. We denote by $\Phi^*$ an output function that best interpolates the data set $S$.

Let $n_1 \equiv n$ (number of hidden neurons), $W^{(1)}\equiv u=[u_1, u_2,\ldots,u_n]^\top\in\rr^{n\times n_0}$ and $W^{(2)}\equiv v=[v_1, v_2,\ldots,v_n]\in\rr^n$. Without loss of generality, we consider a \emph{biasless} one-hidden layer NN:
\begin{align}\label{eq:nn}
	\rr^{n_0} \ni x \mapsto \Phi(x;\theta) \triangleq \kappa(n)\sum_{i=1}^{n} v_i\varrho(u_i x),
\end{align}
where $\kappa(n)$ is some scaling that depends on $n$, \eg, $\kappa(n) = 1/\sqrt{n}$ as in \cite{du2018gradient}. We remark that for an $(L-1)$-hidden layer NN written in the biasless form, the bias vectors can always be recovered by redefining
\begin{align*}
	x \leftarrow
	\begin{bmatrix}
		x\\1
	\end{bmatrix}, \quad
	\theta_l \leftarrow
	vec\left(
	\begin{bmatrix}
		W^{(l)} & b^{(l)}\\
		0 & 1
	\end{bmatrix}
	\right),
\end{align*}
for $l\in[L-1]$, and $\theta_L \leftarrow vec([W^{(L)}~b^{(L)}])$. We assume the following about the activation function $\varrho$, which is satisfied by most activation functions but piecewise linear ones.
\begin{enumerate}[label=\enumlabel{A}, ref=\enumref{A}]
	\item The activation function $\varrho$ is twice differentiable, Lipschitz, and smooth.\label{ass:a}
\end{enumerate}
Below, we briefly describe the NTK regression and its connection with GGN for overparameterized networks by first considering the case $g=0$ in \eqref{eq:reg-prob}.
\paragraph{The NTK and Gradient Descent.}
In the infinite-width limit, there is an established \citep{jacot2018neural} relation between the steps obtained via a gradient-based method for NNs and the so-called \emph{kernel gradient descent} in function space. In particular, it is shown that, as $n\to \infty$, $\forall i,j\in[m]$, $\langle\nabla_\theta \Phi(x_i,\theta_0), \nabla_\theta\Phi(x_j,\theta_0)\rangle$ converges to some positive definite deterministic kernel $k(x_i,x_j)=k(x_i,x_j)^\top\in \rr^{n_L\times n_L}$ (the limiting NTK), and remains unchanged during training. Consider the case $g=0$ in \eqref{eq:reg-prob}. In the infinite-width limit, the gradient descent for solving the resulting problem reduces to the kernel gradient descent:
\begin{align}
	\Phi_{t+1} = \Phi_t - \alpha_t G_t\nabla_{\Phi_t} \hat{R}_s(\Phi_t),\label{eq:kgd}
\end{align}
where $\Phi_t\triangleq(\Phi(x_i;\theta_t))_{i\in [m]}\in\rr^{m}$ denotes the network outputs on $x_i$'s at iteration $t$, $\alpha_t \in\rp$ is a step-size (or \emph{learning rate}), and $G_t$ is an $m\times m$ matrix whose $(i,j)$-th entry is given by $\langle\nabla_\theta \Phi(x_i,\theta_t), \nabla_\theta\Phi(x_j,\theta_t)\rangle$; see, \eg, \cite[Lemma 3.1]{arora2019exact} which considers the continuous-time evaluation of $\Phi_t$, $t\in \rnn$.

\paragraph{GGN and the NTK regression.}
An important feature of GGN for infinite-width NNs is its direct relation with the NTK regression solution in the overparameterized regime. We introduce the notations $Q_t \equiv \nabla_{\Phi_t}^2\hat{R}_s(\Phi_t)$, $e_t \equiv \nabla_{\Phi_t}\hat{R}_s(\Phi_t)$, and consider again the case $g=0$ in problem \eqref{eq:reg-prob}. The GGN for the resulting problem is given by the following iterative process:
\begin{align}
	\theta_{t+1} = \theta_t - \alpha_t(J_t^\top Q_t J_t)^{-1}J_t^\top e_t,\label{eq:ggn}
\end{align}
where $J_t = (\nabla_\theta\Phi(x_1,\theta_t), \ldots, \nabla_\theta\Phi(x_m,\theta_t))^\top\in\rr^{m\times p}$ is the Jacobian matrix (of features) at iteration $t$. For overparameterized networks, if $\ell$ is the squared loss, $Q_t$ becomes the identity matrix and we can conveniently rewrite the GGN updates with respect to the NTK matrix $G_t$ as
\begin{align}
	\theta_{t+1} = \theta_t - \alpha_tJ_t^\top G_t^{-1}e_t.\label{eq:ggn-ntk}
\end{align}
On the other hand, corresponding to \eqref{eq:kgd}, the updates to the parameters $\theta$ via $\theta_t$ can be obtained by solving the regression problem (\cf~\cite{cai2019gram})
\begin{align}\label{eq:ntk-prob}
	\theta_{t+1} = \argmin_{\theta}\frac{1}{2}\norm{\langle J_t,\theta - \theta_t\rangle + \nabla_{\Phi_t}\hat{R}_s(\Phi_t)}^2,
\end{align}
which results from the linearization of $\Phi$ around $\theta_t$. For an overparameterized NN, this linearization provides a good approximation to $\Phi$, and hence the Hessian (with respect to $\theta$) of the resulting empirical risk by replacing $\Phi$ by its linearization, which gives the GGN approximation (see, \eg, \cite{martens2011learning}), is expected to provide a good approximation of the actual Hessian. In terms of kernel ``ridgeless" regression solution to problems of the form \eqref{eq:reg-prob} (with $g=0$) and $\ell$ taken as the squared loss, the update \eqref{eq:ggn-ntk} provides a \emph{minimum-norm} interpolating solution in the so-called Reproducing Kernel Hilbert Space (RKHS) \citep{liang2020just, mohri2018foundations}. Hence, the GGN, in this case, provides a closed-form solution to the NTK regression, which efficiently replaces gradient descent in the NTK formalism.

\subsection{Regularized GGN for Overparameterized Neural Networks}
If $g\ne 0$, the relation between gradient descent and NTK will probably break \citep{wei2019regularization}. Apart from the NTK parameterization, a commonly studied parameterization in the context of overparameterized NNs is the random feature (RF) model \citep{rahimi2007random, balcan2006kernels} which, due to its close connection with a one-hidden layer NN, often provides a prototype for studying realistic NNs. Whether the NTK or the RF parameterization is used, one of the key properties we desire about the dynamics of the optimizer is \emph{stability} which, when established, can help to still benefit a lot from overparameterization in the optimization scope. To this end, we first present the definition of generalized self-concordant (GSC) functions on $\rr^p$ from \cite{sun2019generalized} as follows.
\begin{definition}\label{def:gsc-rn}
	A convex function $g\in \cc{3}(\dom g)$, with $\dom g$ open, is said to be $(M_g,\nu)$-GSC of the order $\nu\in \rp$, with $M_g\in\rnn$, if $\forall x \in \dom{g}$, $\forall u,v \in \rr^p$, $\abs{\left<\thirdn{g}(x)[v]u,u\right>} \le M_g\norm{u}_x^2\norm{v}_x^{\nu-2}\norm{v}^{3-\nu}$, where $\nabla^3{g}(x)[v] \triangleq \lim\limits_{t\to 0} \left\{\left(\nabla^2{g}(x + tv)-\nabla^2{g}(v)\right)/{t}\right\}$.
\end{definition}
We now assume the following about $g$:
\begin{enumerate}[label=\enumlabel{G}, ref=\enumref{G}]
	\item The regularization function $g$ is convex and $(M_g,\nu)$-GSC.\label{ass:sc}
\end{enumerate}

The class of regularization functions satisfying condition \ref{ass:sc} includes the self-concordant smoothing functions for commonly used regularizers such as the $\ell_1$- and $\ell_2$-norms (see \defnref{def:smooth-function} below). The resulting smooth approximation has the key property that it \emph{epi-converges} to the original regularizer, providing useful features that can be exploited on the epigraph of $g$ for optimization.
\begin{definition}[\cite{adeoye2024self}]\label{def:smooth-function}
	The parameterized function $g\colon \rr^p \times \rp \to \rr$ is said to be a self-concordant smoothing function for a function $\bar{g}\in\pclsc{\rr^p}$ if $\epilim\limits_{\mu\downarrow 0} g = \bar{g}$ and $g(\cdot; \mu)$ is $(M_g,\nu)$-GSC, where $\mu\in\rp$ is a smoothing parameter.
\end{definition} 

For the regularized problem \eqref{eq:reg-prob} (with $g$ satisfying \ref{ass:sc}), the corresponding GGN update is obtained by augmenting the terms $Q_t$, $e_t$ and $J_t$, respectively by $0$, $1$ and $\nabla g(\theta_t)$ in the appropriate dimensions \citep{adeoye2023score}. Let us denote these augmented counterparts by $\Qaug_t$, $\eaug_t$ and $\Jaug_t$. We then write for the GGN
\begin{align}
	\theta_{t+1} = \theta_t - \alpha_t(\Jaug_t^\top \Qaug_t \Jaug_t + H_t)^{-1}\Jaug_t^\top \eaug_t,\label{eq:ggn-reg}
\end{align}
or in its convenient form for overparameterized models as \citep{adeoye2023score}
\begin{align}
	\theta_{t+1} = \theta_t -\alpha_t H_t^{-1}\Jaug_t^\top(I+\Qaug_t \Jaug_t H_t^{-1}\Jaug_t^\top)^{-1}\eaug_t,\label{eq:ggn-step-approx}
\end{align}
where $H_t \equiv \hessn g(\theta_t)$. Relative to the minimal assumptions required to control the dynamics of the network outputs for the unregularized case, \eg, positive definiteness of $G_t$ (which indeed holds in the overparameterized regime), we need the following standard regularity assumptions on $\hat{R}_s$, $\Qaug_t$ and $\eaug_t$ (see \appref{app:proof-main} for details on the regularity terms):
\begin{enumerate}[label=\enumlabel{R}, ref=\enumref{R}]
	\item $\hat{R}_s$ is $\gamma_R$-strongly convex, and has upper-bounded gradients and Hessian; $g$, $\Qaug_t$ and $\eaug_t$ are locally bounded.\label{ass:r}
\end{enumerate}
An important consequence of condition \ref{ass:a} is that, in addition to $g$ admitting a Lipschitz continuous gradient (see \lemref{thm:g-properties} in \appref{app:sc-properties}), we get that $J_t$ is (locally) Lipschitz continuous (see \appref{app:Lipschitz-J}). Then, together with \ref{ass:r} and the stability of $H_t$, we can control the key terms $H_t^{-1}\Jaug_t^\top(I+\Qaug_t \Jaug_t H_t^{-1}\Jaug_t^\top)^{-1}$ and $\eaug_t$ appearing in \eqref{eq:ggn-step-approx}.

Corresponding to \eqref{eq:kgd}, the overparameterized NN trained according to \eqref{eq:ggn-step-approx} evolves in discrete-time as
\begin{align}
	\Phi_{t+1} = \Phi_t - \alpha_t \Gaug_t \eaug_t, \label{eq:ggn-evolution}
\end{align}
where $\Gaug_t \triangleq J_tH_t^{-1}\Jaug_t^\top (I + \Qaug_t \Jaug_t H_t^{-1}\Jaug_t^\top)^{-1}\in \rr^{m\times(m+1)}$. One major observation about the behaviour of the dynamics of $G_t$ in \eqref{eq:kgd} in the overparameterized setting is its stability throughout the training process, which characterizes the optimizer's global optimality \citep{du2019gradient}. In the analysis of gradient descent, most stability and convergence results in the literature heavily rely on the (strictly positive) minimum eigenvalue of $G_t$. These kinds of results are not immediate with $\Gaug_t$ or, in general, with explicit regularization. However, the self-concordant condition on $g$ ensures that its Hessian is at least locally stable\footnote{As noted in the introduction, self-concordance helps to control the rate at which the Hessian of $g$ changes locally, and this property has been recently formalized and studied for the notion of local and global \emph{Hessian stability} in convex optimization (see, \eg, \cite{karimireddy2018global, gower2019rsn, carmon2020acceleration}).}, and hence for an appropriate parameterization of the NN, we can ensure the stability of the dynamics of $\Gaug_t$. In addition to these, also noteworthy is an immediate deduction from the Lipschitzness of $\varrho$ and $\nabla g$: the boundedness of the singular values of $\Jaug_t$ away from zero.

\section{Theoretical Result}
We study the convergence of self-concordant-regularized GGN for the one-hidden layer network. In line with the settings of Section~\ref{ss:ggn-settings}, the learning rate selection rule we consider throughout is
\begin{align}\label{eq:step-size}
	\alpha_t = \frac{\bar{\alpha}_t}{1+M_g\eta_t},
\end{align}
where $0 < \bar{\alpha}_t \le 1$ and $\eta_t = \|\nabla g(\theta_t)\|_{\theta_t}^*$. Without any emphasis on the particular choice of the target function $\Phi^*$, we assume it is given by any \emph{universally consistent}\footnote{Informally speaking, a learning algorithm is said to be universally consistent if the error of its estimate tends to zero as the sample size tends to infinity, for all distributions of the sample space such that the second moment of the output variable is finite. For a more precise context, see, \eg, \cite{caponnetto2007optimal} and the references therein.} algorithm as a minimum requirement. An example, in the case $\ell(\Phi,y) \triangleq \frac{1}{2}(\Phi - y)^2$ in \eqref{eq:reg-prob}, is the regularized least squares algorithm which, for a kernel prediction function $\Phi_\RF \equiv \Phi^*$ is defined by
\begin{align}\label{eq:rf}
	\rr^{n_0} \ni x \mapsto \Phi_\RF(x;(u^\infty,v^*)) \triangleq \sum_{i=1}^{n} v_i^*\varrho(u_i^\infty x).
\end{align}
This yields the estimator $v^*=[v_1^*, v_2^*,\ldots,v_n^*]\in\rr^n$, the unique minimizer of the $\ell_2$-regularized empirical loss $\frac{1}{2m}\sum_{i=1}^{m}(\Phi_\RF(x_i;v) - y_i)^2 + \frac{\lambda}{2}\norm{v}^2$, for some $\lambda\in\rnn$, where the entries of $u^\infty=[u_1^\infty, u_2^\infty,\ldots,u_n^\infty]^\top\in\rr^{n\times n_0}$ remain fixed iid random variables.

We state our main result of this section in \thmref{thm:main-result} below. The detailed proof is given in \appref{app:proof-main}. We let $\tPhi_t\in\rr^{m+1}$ denote the vector obtained by augmenting $\Phi_t$ by $1$. This $\tPhi_t$ corresponds to augmenting the rows of $J_t$ in the definition of $\Gaug_t$ by the vector whose entries are all zeros except the last entry which has the value $\tilde{\phi}_t = 1/\phi_{t-1}^{m+1}\in\rr$, where $\phi_t^{m+1}$ denotes the last entry of $H_t^{-1}\Jaug_t^\top (I + \Qaug_t \Jaug_t H_t^{-1}\Jaug_t^\top)^{-1}\eaug_t\in\rr^{n\times 1}$ at a time $t$. Let this augmented version of $J_t$ be denoted by $\tilde{J}_t$. Then, we define $\tG_t \triangleq \tilde{J}_tH_t^{-1}\Jaug_t^\top (I + \Qaug_t \Jaug_t H_t^{-1}\Jaug_t^\top)^{-1}\in \rr^{(m+1)\times(m+1)}$, and
\begin{align}
	\tPhi_{t+1} = \tPhi_t - \alpha_t \tG_t \eaug_t. \label{eq:ggn-evolution-aug}
\end{align}
We also let $\tPhi^*\in\rr^{m+1}$ denote the vector obtained by augmenting $\Phi_t^*$ by $0$. Additional regularity terms are explicitly defined in \appref{app:proof-main}.

\begin{theorem}\label{thm:main-result}
	Suppose that conditions \ref{ass:a}, \ref{ass:sc} and \ref{ass:r} hold for problem \eqref{eq:reg-prob}. Let $\Phi^*$ be a universally consistent target function that best interpolates the training data set $S$. Let $B_R$, $B_\Phi$, $B_g$, $d_g$, $d_q$, $\beta$, $D_g$ and $D_R$ be the regularity terms given by condition \ref{ass:r}, and denote $\hat{\beta}_m\triangleq \smin{J}\equiv \smin{J^\top}$. Then, in the ellipsoid $\calE_r(\theta_0)$ for some $r\in\rp$ and an initialization $\theta_0$, the regularized GGN for problem \eqref{eq:reg-prob} satisfies the following properties:
	\begin{enumerate}[label=\enumlabel{P}, ref=\enumref{P}]
		\item
		Fix $0<\bar{\alpha}_t\equiv\bar{\alpha} < 1$, and choose $\begin{aligned}
			T \triangleq \frac{1}{\bar{\alpha}}\log(\|\tPhi_0 - \tPhi^*\|^2/\epsilon)
		\end{aligned}$ for any $\epsilon \in (0,1)$. It holds that $\begin{aligned}
			\norm{\tPhi_T - \tPhi^*}^2 \le \epsilon + 1
		\end{aligned}$ after $T$ iterations, if $1+M_g\eta_t \le \|\tG_t\|_F$ and $|\tG_{22}|\ge |\langle\tG_{21}^\top + \tG_{12},\tilde{v}\rangle|$ for some $\tilde{v}$ depending on $t$ when $t\le T$ where, given a $2\times 2$ block partitioning of $\tG_t$, $\tG_{22}\in\rr^{1\times 1}$, $\tG_{21}\in\rr^{1\times (m+1)}$, and $\tG_{12}\in\rr^{(m+1)\times 1}$ respectively denote the lower right, lower left and upper right blocks of $\tG_t$,  \label{p:result1}
		\item
		$\begin{aligned}
			\calL(\theta_{t+1}) \le \calL(\theta_t) - \left[\vartheta L_{D_t}^2(1 + D_g\varpi_t) - \xi L_{D_t}\right]
		\end{aligned}$, for $t\in\rnn$, where $L_{D_t} \triangleq \frac{\alpha_t\beta\hat{\beta}_1D_g}{d_g(D_g + d_q\hat{\beta}_m^2)}$, $\xi \triangleq B_RB_\Phi + B_g$, $\vartheta \triangleq B_\Phi^2(\gamma_R - D_R)$, $\varpi_t \triangleq \omega_\nu(d_\nu(\theta_t,\theta_{t+1})) - \omega_\nu(-d_\nu(\theta_t,\theta_{t+1}))$, $\omega_\nu$ is an increasing univariate function, $d_\nu$ is a scaled metric term associated with the self-concordance of $g$, and we assume $d_\nu(\theta_t,\theta_{t+1})<1$. \label{p:result2}
	\end{enumerate}
\end{theorem}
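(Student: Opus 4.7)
The plan is to prove the two properties separately, since they address different quantities: \ref{p:result1} tracks the residual $\|\tPhi_t - \tPhi^*\|^2$ in output space via the dynamics \eqref{eq:ggn-evolution-aug}, while \ref{p:result2} tracks the parameter-space objective $\calL(\theta_t)$ via a one-step descent lemma. For both, the step-size formula \eqref{eq:step-size}, the regularity list in \ref{ass:r}, the Lipschitz properties of $J_t$ from \ref{ass:a}, and the GSC structure of $g$ from \ref{ass:sc} are the main ingredients. I would carry out the work inside the ellipsoid $\calE_r(\theta_0)$, so that the local boundedness of $\Qaug_t, \eaug_t$ and the Hessian stability of $g$ (which Definition~\ref{def:gsc-rn} encodes via $\omega_\nu$) all become usable inequalities with explicit constants.

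For \ref{p:result1}, I start from the identity
\begin{equation*}
\|\tPhi_{t+1}-\tPhi^*\|^2 = \|\tPhi_t-\tPhi^*\|^2 - 2\alpha_t\langle \tG_t\eaug_t,\,\tPhi_t-\tPhi^*\rangle + \alpha_t^2\|\tG_t\eaug_t\|^2.
\end{equation*}
The augmentation of $\tPhi_t$ by $1$ and of $\tPhi^*$ by $0$, together with the special definition of $\tilde{\phi}_t$, is engineered so that the last row/column of $\tG_t$ feeds the constant $1$ through the recursion: splitting $\tG_t$ into the four stated blocks and writing $\eaug_t=(e_t;1)$, the block condition $|\tG_{22}|\ge|\langle\tG_{21}^\top+\tG_{12},\tilde v\rangle|$ (with $\tilde v$ chosen proportional to the $m$-dimensional residual) will force the cross term $\langle\tG_t\eaug_t,\tPhi_t-\tPhi^*\rangle$ to dominate, up to a residual contribution of at most $1$ coming from the augmented coordinate. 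The step-size condition $1+M_g\eta_t\le\|\tG_t\|_F$ combined with \eqref{eq:step-size} yields $\alpha_t\|\tG_t\|_F\ge\bar\alpha$, which controls the quadratic term $\alpha_t^2\|\tG_t\eaug_t\|^2$ and gives a contraction of the form $\|\tPhi_{t+1}-\tPhi^*\|^2\le(1-\bar\alpha)\|\tPhi_t-\tPhi^*\|^2+c_t$ with $c_t\le\bar\alpha$. Iterating, telescoping the geometric sum to $1$, and plugging in $T=\frac{1}{\bar\alpha}\log(\|\tPhi_0-\tPhi^*\|^2/\epsilon)$ together with $(1-\bar\alpha)^T\le e^{-\bar\alpha T}$ delivers $\|\tPhi_T-\tPhi^*\|^2\le\epsilon+1$.

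For \ref{p:result2}, I split $\calL=\hat R_s+g$ and bound each piece. The $\gamma_R$-strong convexity and Hessian/gradient bounds from \ref{ass:r} give $\hat R_s(\Phi(\cdot;\theta_{t+1}))\le \hat R_s(\Phi_t)+\langle e_t,\Phi_{t+1}-\Phi_t\rangle+\tfrac12(\gamma_R-D_R)^{-1}\|\Phi_{t+1}-\Phi_t\|^2$ after using the chain rule through $\Phi$ and absorbing $B_\Phi$ into the coefficient $\vartheta$. For $g$, the GSC inequality from Definition~\ref{def:gsc-rn} (applied in its standard two-sided form as in \cite{sun2019generalized}) yields $g(\theta_{t+1})\le g(\theta_t)+\langle\nabla g(\theta_t),\theta_{t+1}-\theta_t\rangle+\varpi_t\|\theta_{t+1}-\theta_t\|_{H_t}^2$ with $\varpi_t=\omega_\nu(d_\nu(\theta_t,\theta_{t+1}))-\omega_\nu(-d_\nu(\theta_t,\theta_{t+1}))$, valid since $d_\nu(\theta_t,\theta_{t+1})<1$. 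The step norm is then bounded by $L_{D_t}$ by factoring $\theta_{t+1}-\theta_t=-\alpha_t H_t^{-1}\Jaug_t^\top(I+\Qaug_t\Jaug_tH_t^{-1}\Jaug_t^\top)^{-1}\eaug_t$ and estimating $\|H_t^{-1}\|\le 1/d_g$, $\smax{\Jaug_t}\le\hat\beta_1$, $\|(I+\Qaug_t\Jaug_tH_t^{-1}\Jaug_t^\top)^{-1}\|\le D_g/(D_g+d_q\hat\beta_m^2)$, and $\|\eaug_t\|\le\beta$. Summing the two upper bounds, collecting the linear-in-$L_{D_t}$ term $\xi L_{D_t}$ from the gradient contributions ($B_RB_\Phi$ from $\hat R_s$ plus $B_g$ from $g$), and the quadratic term $\vartheta L_{D_t}^2(1+D_g\varpi_t)$ from the two second-order pieces, gives the stated descent inequality.

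The main obstacle is the bookkeeping in \ref{p:result1}: the augmentation with $\tilde\phi_t=1/\phi_{t-1}^{m+1}$ is non-standard, and one must verify that the block inequality $|\tG_{22}|\ge|\langle\tG_{21}^\top+\tG_{12},\tilde v\rangle|$ is exactly what is needed to prevent the extra last coordinate from corrupting the cross-term contraction, while still leaving a controllable additive error that telescopes to the stated $+1$. A secondary technical point in \ref{p:result2} is ensuring that $\varpi_t$ emerges cleanly as the difference $\omega_\nu(d_\nu)-\omega_\nu(-d_\nu)$ rather than a one-sided version; this requires combining the upper bound on $g(\theta_{t+1})$ with a compensating lower bound on $\hessn g$ along the segment, both instances of the GSC inequality with opposite signs.
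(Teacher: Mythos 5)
Your overall architecture coincides with the paper's: for \ref{p:result1} you expand $\norm{\tPhi_{t+1}-\tPhi^*}^2$, block-partition $\tG_t$, invoke the condition on $|\tG_{22}|$ with $\tilde v$ taken as the residual block, and telescope a $(1-\bar\alpha)$-contraction whose additive errors sum to $1$; for \ref{p:result2} you combine smoothness/strong convexity of $\hat{R}_s$ with the two-sided GSC bounds and the step-norm estimate. However, in \ref{p:result1} there is a genuine gap: you assert that $1+M_g\eta_t\le\norm{\tG_t}_F$, i.e. $\alpha_t\norm{\tG_t}_F\ge\bar\alpha$, ``controls the quadratic term $\alpha_t^2\norm{\tG_t\eaug_t}^2$''. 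A lower bound on $\alpha_t$ cannot control a term that grows with $\alpha_t$. In the paper's proof the quadratic term is first bounded by $\alpha_t^2\tr(\tG_t^\top\tG_t)\norm{\tPhi_t-\tPhi^*}^2$ via the trace inequality (Lemma~\ref{thm:mat-prod}), and is then absorbed into the negative contribution $-2\alpha_t\norm{\tG_{11}^{1/2}\tPhi_1}^2$ of the cross term --- which is available only because $\tG_{11}\succ0$ and $\tG_{22}>0$, established separately from $H_t\succ0$ (Lemma~\ref{thm:G-positive}) --- and this absorption requires the \emph{additional upper} restriction $\alpha_t\le\norm{\tG_{11}^{1/2}\tPhi_1}^2/\bigl(\tr(\tG_t^\top\tG_t)\norm{\tPhi_t-\tPhi^*}^2\bigr)$, which the paper assumes alongside the Frobenius condition; the Frobenius condition itself serves only to give $\alpha_t\ge\bar\alpha/\sqrt{\tr(\tG_t^\top\tG_t)}$, turning the surviving negative term into $-\bar\alpha\norm{\tPhi_t-\tPhi^*}^2$. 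Your sketch supplies neither the positivity of the blocks nor this upper restriction, so the contraction does not follow as reasoned. Similarly, your per-step constant ``$c_t\le\bar\alpha$'' is not derived; in the paper the additive term comes from relaxing $|\tG_{22}|\ge|\langle\tG_{21}^\top+\tG_{12},\tPhi_1\rangle|$ by an arbitrary slack $1/(\alpha_t C_2)$ and then choosing $C_2$ so that the geometric sum equals exactly $1$. You should also state explicitly that, for the squared loss with interpolating $\Phi^*$, $\eaug_t=\tPhi_t-\tPhi^*$, which is what turns your cross term into the quadratic form being partitioned.

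For \ref{p:result2} your route is the paper's, and your direct Cauchy--Schwarz estimate $\langle e_t,\Phi_{t+1}-\Phi_t\rangle+\langle\gradn g(\theta_t),\delta_t\rangle\le(B_RB_\Phi+B_g)\norm{\delta_t}$ is an acceptable (arguably cleaner) substitute for the paper's detour through the Lipschitz constant of $\calL$ (Lemma~\ref{thm:L-Lipschitz}), since that lemma is itself proved from the same gradient bounds. But the quadratic coefficient you wrote is not what the argument produces: the bound $\norm{\hessn_\Phi\hat{R}_s}_{op}\le D_R$ gives $\tfrac{D_R}{2}\norm{\Phi_{t+1}-\Phi_t}^2$, and only after subtracting the strong-convexity term $\tfrac{\gamma_R}{2}\norm{\Phi_{t+1}-\Phi_t}^2$ and using $\norm{\Phi_{t+1}-\Phi_t}\le B_\Phi\norm{\delta_t}$ does the net coefficient proportional to $B_\Phi^2(D_R-\gamma_R)$, i.e. $-\vartheta$, appear; the stated $\tfrac12(\gamma_R-D_R)^{-1}$ is incorrect. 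Your treatment of $\varpi_t$ (upper and lower GSC bounds of Lemma~\ref{thm:g-bound0}, valid since $d_\nu(\theta_t,\theta_{t+1})<1$) and of the step norms ($\norm{\delta_t}\le L_{D_t}$, $\norm{\delta_t}_{\theta_t}\le\sqrt{D_g}L_{D_t}$, exactly Lemma~\ref{thm:step-dist}) matches the paper.
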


\begin{remark}
	The condition that $1+M_g\eta_t \le \|\tG_t\|_F$ in \ref{p:result1} highlights an important aspect of the regularization. We often want to control the regularization strength via a parameter $\tau\in\rp$. Consider this general case in which $g$ takes the form $g(\theta) = \tau \bar{g}(\theta)$. Here, we only require that $\bar{g}$ is GSC so that $g$ satisfies \ref{ass:sc}. Observe that $\|\tG_t\|_F \le \sqrt{(m+1)\lambda_1(\tG_t^\top\tG_t)}$ for all $t$. Then since $M_g\eta_t$ can become arbitrarily small, it is only reasonable to choose $\tau$ satisfying $1+\tau M_g\eta_0 \le \sqrt{(m+1)\lambda_1(\tG_0^\top\tG_0)}$ in order to have the theoretical guarantee. In this setting, we essentially rely on the local stability of $\tG_t$ via overparameterization and the self-concordance of $g$.
\end{remark}

Since the direct relation between GGN and NTK probably breaks with an explicit regularization, our main proof step in \thmref{thm:main-result} involves analyzing a partitioning of the matrix $\Gaug_t$. In this way, we determine what conditions on the separate blocks help to combine certain spectral properties of $\tG_t$ with our regularity conditions and the self-concordance of $g$. The second result becomes almost immediate in the optimization scope under the regularity conditions.

Under the strong convexity assumption on $\hat{R}_s$, the global convergence of GGN can be guaranteed in the case of no regularization, for example, by training only the last layer of the NN given the property of \emph{no blow-up} of the GGN dynamics \citep[Proposition 1, Proposition 3]{arbel2023rethinking}. Consider the matrix $G^\infty$ whose $(i,j)$-th entry is given by $\langle \gradn_v{\Phi(x_i;(v,u^\infty)), \gradn_v{\Phi(x_j;(v,u^{\infty}))}}\rangle$, where $u^\infty$ is as defined in \eqref{eq:rf}. The main observation here is that the function $v \mapsto r(v) = \hat{R}_s(\Phi(\cdot;(v,u^\infty)))$ can be shown to satisfy a certain Polyak-{\L}ojasiewicz (PL) inequality, that is \citep[Proposition 4]{arbel2023rethinking} $\frac{1}{2}\norm{\gradn_v{l(v)}}^2 \ge \gamma_R\sigma_{\infty_n}^2(l(v) - \hat{R}_s(\Phi^*))$, where $\sigma_{\infty_n}$ is the minimum singular value of $G^\infty$. With a self-concordant regularization function $g$, this kind of global property is retained, provided that $g$ and $\hat{R}_s$ do not conflict. Consider, for example, the sublevel set $\calS_{g(\theta)}(g) \triangleq \set{\bar{\theta}\in \dom{g} \st g(\bar{\theta})\le g(\theta)}$ of $g$. Then, following \cite[Theorem 4]{sun2019generalized}, we get that $\calS_{g(\theta)}(g)$ is bounded for $\nu \in [2,3]$, and hence $g$ attains its minimum.

\section{Experiments}\label{ss:experiments}
We present numerical results from experiments performed on GGN with self-concordant regularization (GGN-SCORE) for overparameterized NNs on synthetic datasets as well as on the MNIST dataset. Results of additional experiments on the FashionMNIST and three UCI datasets are reported in \appref{app:add-mnist}. The code to reproduce the experimental results is available at \url{https://github.com/adeyemiadeoye/ggn-score-nn}.

\paragraph{Experimental setup.} We consider the \emph{teacher-student} setting in which $\Phi$ defined by \eqref{eq:nn} with $\kappa(n) = 1/\sqrt{n}$ is the student NN, while the teacher NN is the target function $\Phi^*$, a one-hidden layer NN given as
\begin{align}\label{eq:teacher-nn}
	\rr^{n_0} \ni x \mapsto \Phi^*(x;\theta^*) \triangleq \sum_{i=1}^{n^*} v_i^*\varrho(u_i^* x),
\end{align}
where $\theta^* \equiv (u^*,v^*)$. In both teacher and student networks, we use the SiLU activation function \citep{elfwing2018sigmoid} $\varrho(x) \triangleq x/(1+\exp(-x))$. In each experiment, we generate $m$ training data points $(x_i,y_i)_{i\in [m]}$, where the inputs $x_i$ are uniformly sampled on the unit sphere $\mathbb{S}^{n_0 - 1}\triangleq \{x \st \|x\|=1\}$ and the corresponding target outputs are given by $y_i = \Phi^*(x_i;\theta^*)$. The weights of the teacher NN are randomly generated as in \cite{chizat2019lazy}: they are normalized random weights satisfying $\|v^*_iu^*_i\| = 1$ for $i=1,\ldots,n^*$. The student NN is initialized with randomly generated weights from the Gaussian distribution. In all the experiments, we fix $n=500$ and $n^*=5$. The student NN is trained by minimizing the regularized empirical risk in \eqref{eq:reg-prob} with the squared loss $\ell$ (the empirical risk is unregularized for GD), and we consider regularization of the form $g(\theta) = \tau \bar{g}(\theta)$, where $\tau \in \rp$ and $\bar{g}$ is given by \citep[Example 1]{adeoye2024self}: $\bar{g}(\theta) = (\infconv{\|\cdot\|_1}{h_\mu})(\theta) = \sum_{i=1}^{n}\frac{\mu^{2}-\mu  \sqrt{\mu^{2}+\theta_i^{2}}+\theta_i^{2}}{\sqrt{\mu^{2}+\theta_i^{2}}}$, $h_\mu(\cdot) \triangleq \mu h(\cdot/\mu)$, $h(\theta) = \sum_{i=1}^{n} ((1+\abs{\theta_i}^2)^{1/2}-1)$, which gives the $(M_g,\nu)$-GSC function
\begin{align}
	g(\theta) = \tau \sum_{i=1}^{n}\frac{\mu^{2}-\mu  \sqrt{\mu^{2}+\theta_i^{2}}+\theta_i^{2}}{\sqrt{\mu^{2}+\theta_i^{2}}},\label{eq:g-example}
\end{align}
with $M_g=2\mu^{-0.7}p^{0.2}$, $\nu = 2.6$ (see \lemref{thm:g-properties}). We choose $\mu = 1/\kappa(n)$ and $\tau = 10^{-4}$, except for where we consider different values for comparison. We set $\bar{\alpha}_t \equiv \bar{\alpha} = 0.95$ in \eqref{eq:step-size} for GGN and use a learning rate of $1$ for GD. All experiments are performed on a laptop with $16 \times 2.30$GHz Intel Core i7-11800H CPU and 32GB RAM.

\subsection{Results and discussion}
\paragraph{Test loss vs. smoothing parameter.} We compare the performance of GGN-SCORE for different values of the regularization smoothing parameter $\mu$ evenly spaced in the range $[10^{-3},10]$, giving $41$ different values in total. We use a reasonable amount of training samples, $500$, which allows to perform several independent runs for each value of $\mu$ considered. We use a test size of $1000$ to measure generalization of the student NN for each $\mu$. We perform $10$ independent runs for each value of $\mu$ considered and took the average value of the results. These are shown in \figref{fig:testloss-vs-mu-tau}. We observe that larger values of $\mu$ yields better performance in the optimization scope and also better generalization. This result is quite intuitive, since by definition of the regularization function, the size of $\mu$ should scale with the size of the variable $\theta$ in order to have an adequate smooth approximation of the original nonsmooth function. For this reason, it is recommended to choose $\mu = c/\kappa(n)$ for any $c > 0$ when scaling with $\kappa(n)=1/\sqrt{n}$.
\begin{figure*}[h!]
	\centering
	\begin{subfigure}[b]{0.4\textwidth}
		\centering
		\includegraphics[width=\textwidth]{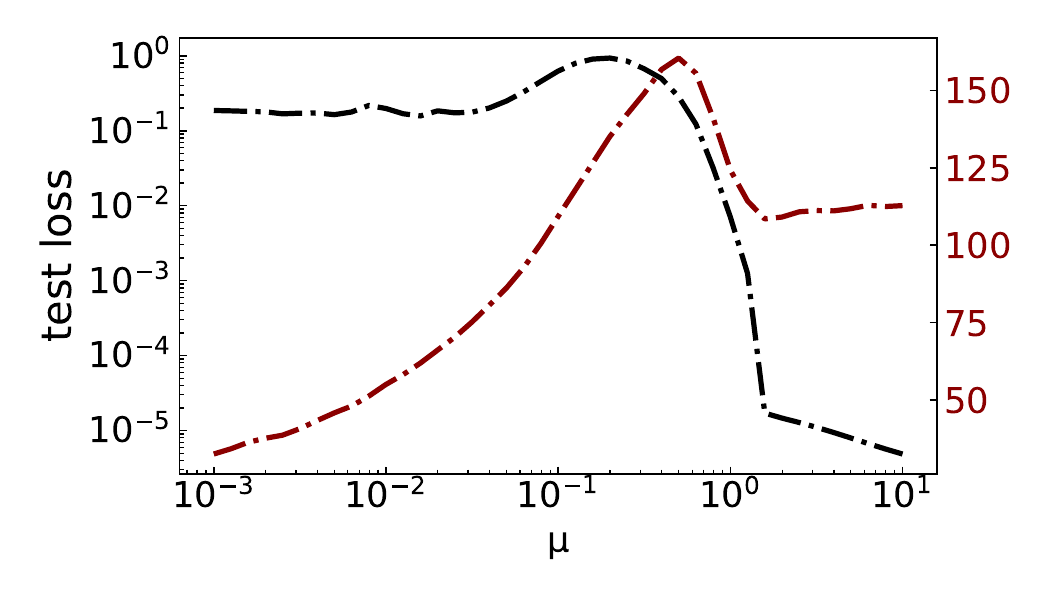}
	\end{subfigure}
	\begin{subfigure}[b]{0.4\textwidth}
		\centering
		\includegraphics[width=\textwidth]{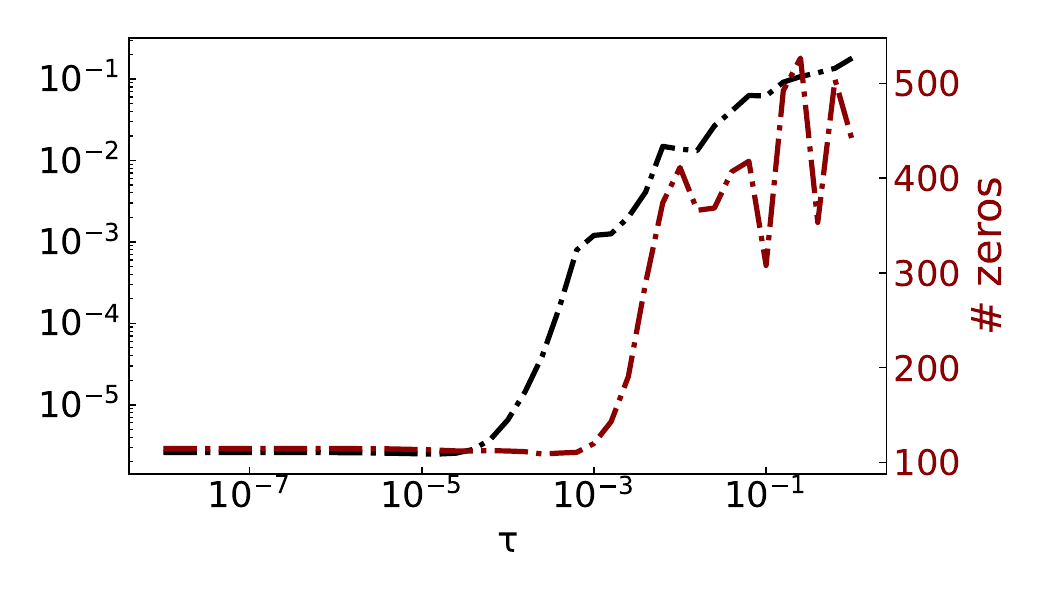}
	\end{subfigure}
	\caption{Performance of GGN-SCORE with $g(\theta)$ as in \eqref{eq:g-example}. \textbf{Left:} Results for different values of $\mu$, with $\tau=10^{-4}$. \textbf{Right:} Results for different values of $\tau$, with $\mu = 1/\sqrt{n}$. Results are averaged over $10$ independent runs for each value of $\mu$ and $\tau$, resp.; the total computation time is $\sim 21$ hours, $2$ minutes on CPU.}
	\label{fig:testloss-vs-mu-tau}
\end{figure*}
\paragraph{Test loss vs. regularization strength.} We study the influence of the regularization strength on the evolution of the test loss within the optimization loop of GGN-SCORE. We consider different values of $\tau$ evenly spaced in the range $[10^{-8},1]$, giving $41$ different values in total, and set $\mu = 1/\sqrt{n}$. Here, we also use a training size of $500$ and a test size $1000$ which allows to perform several independent runs for each value of $\tau$ considered. We perform $10$ independent runs for each value of $\tau$ and compute the average value of the results. These average values are shown in \figref{fig:testloss-vs-mu-tau}. We observe that smaller values of $\tau$ yield smaller training and test errors for the overparameterized NN. This observation corroborates with the analysis of \cite{wei2019regularization} for GD. However, contrarily to \cite{wei2019regularization}, what we observe for the GGN-SCORE is not an arbitrarily small regularization strength to achieve a good generalization performance. In fact, any value of $\tau$ slightly smaller than $10^{-4}$ in our experiment gives a similar generalization error as the choice $\tau = 10^{-6}$ (and smaller). \figref{fig:testloss-vs-mu-tau} also displays the average number of zero entries in the value of $\theta$ at the end of training. As observed, larger values of $\tau$ yields a sparser/simpler model. In principle, a desirable value of $\tau$ is one which helps to avoid overfitting of the NN model such that a simpler model implies better generalization.
\begin{figure*}[h!]
	\centering
	\begin{subfigure}[b]{0.3\textwidth}
		\centering
		\includegraphics[width=\textwidth]{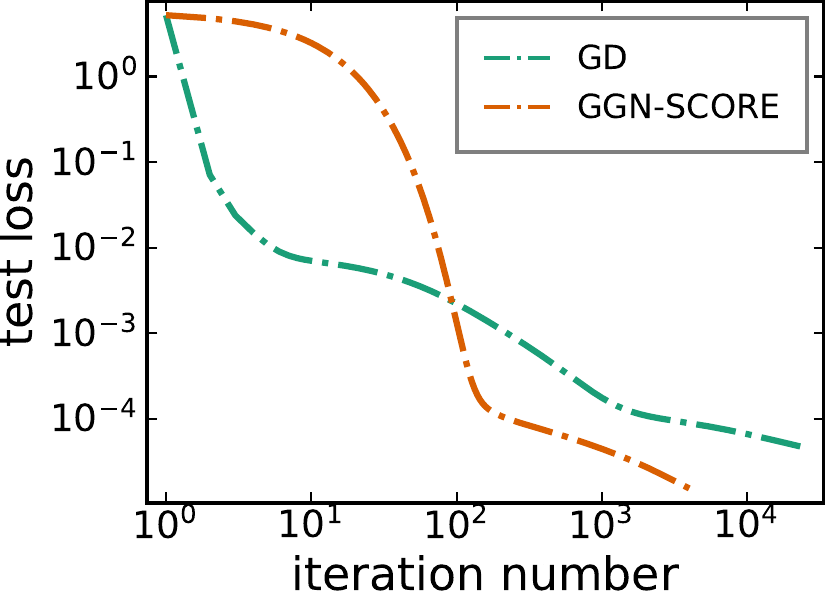}
	\end{subfigure}
	\hfil
	\begin{subfigure}[b]{0.3\textwidth}
		\centering
		\includegraphics[width=\textwidth]{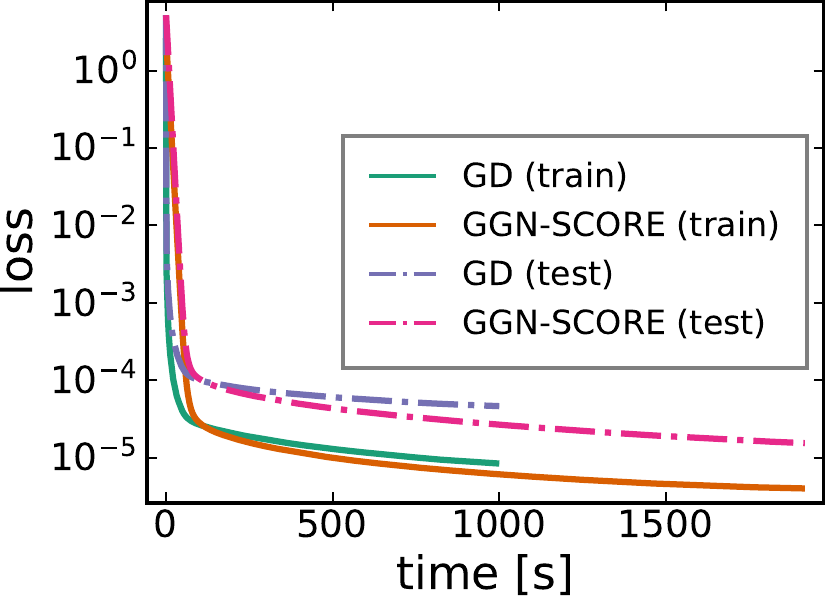}
	\end{subfigure}
	\caption{Performance of GD and GGN-SCORE per iteration number (\textbf{left}) and time in seconds (\textbf{right}) with $g(\theta)$ as in \eqref{eq:g-example} for GGN-SCORE, $\tau=10^{-4}$, $\mu = 1/\sqrt{n}$.}
	\label{fig:traintestloss-vs-epoch-time}
\end{figure*}
\paragraph{Performance comparison in the optimization loop.} We generate training and test datasets of sizes $1000$ and $2000$, respectively, and compare the training and test losses per iteration and time in seconds between GD and GGN-SCORE for training the student NN. The results are displayed in \figref{fig:traintestloss-vs-epoch-time}. The dimension of the input data in this experiment is $20$, and the number of hidden neurons for the student network is $500$. Choosing a much smaller number of hidden neurons $n^*=5$ for the teacher network keeps the optimization loop in the overparameterized regime. For the GD, we use a learning rate of $1$ which yields a much better performance than smaller values. While larger learning rates could yield faster learning at the beginning of training, we notice traces of divergence later on; a learning rate of $1$ gives a reasonably good descent and good performance of GD. We run GD for a total of $10000$ steps and GGN-SCORE for a total of $4000$ steps. GGN is well-known for its faster convergence in terms of number iterations, while sometimes, we may have to train for a longer time. Results here show that we do not trade total training time for better performance with our GGN-SCORE setup.
\subsection{Experiments on real datasets}\label{sec:mnist}
The computations involved in full-batch GD and/or GGN are intractable on real-world datasets. We study the performance of GGN-SCORE in the mini-batch setting on the standard MNIST dataset \cite{lecun2010mnist} with $n_0=784$, $m=60000:10000$ (training:test splits). Experimental results in the teacher-student setup according to \cite[Appendix C.4]{arbel2023rethinking} with the SiLU activation function are reported in \appref{app:add-mnist}. Additional experiments on three UCI datasets, as well as those on the FashionMNIST dataset, are also considered in the appendix. Here, we consider a NN of the form \eqref{eq:nn} with a hidden size $n=512$, a scaling $\kappa(n)=1/\sqrt{n}$ and the ReLU \cite{nair2010rectified} activation function. The NN is initialized with randomly generated weights from the Gaussian distribution, and is trained with the squared loss. The regularization function $g$ used in GGN-SCORE is given by \eqref{eq:g-example}. All results shown for GGN-SCORE are for a training batch size of $16$ (\ie, $3750$ training steps) and a single epoch. In addition to the test loss and prediction accuracy of the trained model, we adopt a time-invariance ``T-I" measure, representing the average proportion (in percentage) of the entries of the \emph{pre-activation} $a_l\in \rr^{n\times n_0}$ that satisfy $\sign(a_{ij}^{\text{start}}) = \sign(a_{ij}^{\text{final}})$, where $\sign$ is the \emph{signum function}, $a_{ij}^{\text{start}}$ are positional entries of $a_l$ at initialization and $a_{ij}^{\text{final}}$ are its entries at the end of training. This metric was used in \cite{chizat2019lazy} to measure the ``stability of activations" where high values indicate an effective linearization of the NN model. See additional details and remark in \appref{app:ti-measure}.
\begin{figure}[h!]
	\centering
	\includegraphics[width=0.8\textwidth]{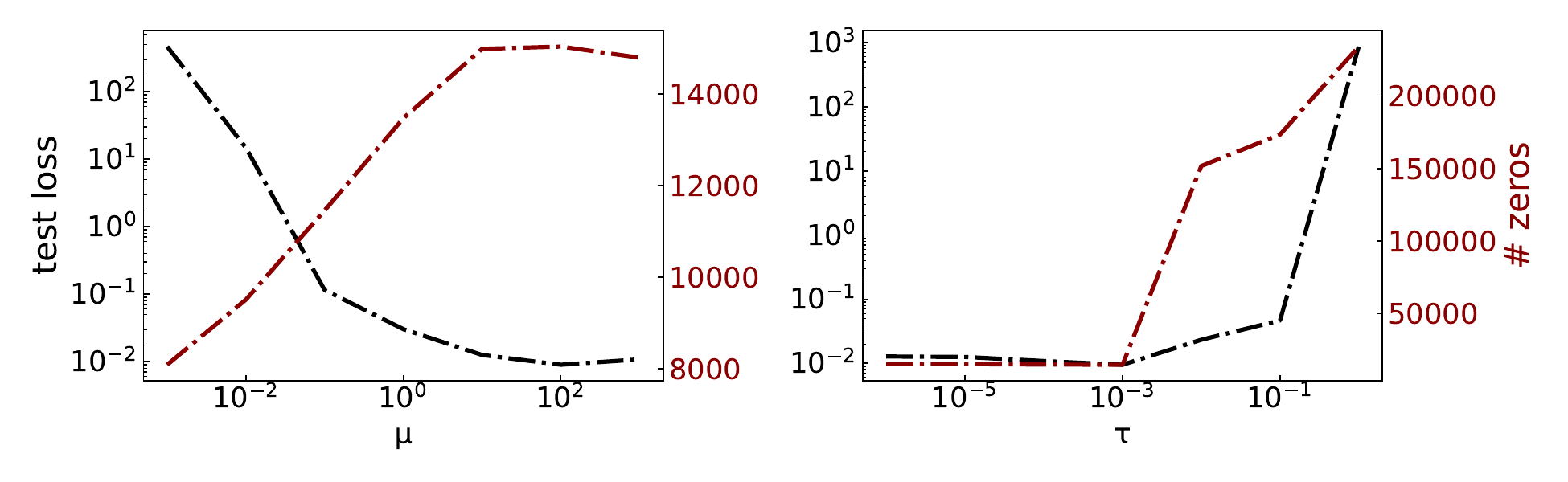}
	\caption{Test loss evaluation of the GGN-SCORE-trained NN on MNIST dataset for different values of the regularization smoothing parameter $\mu$ fixing $\tau=10^{-4}$ (\textbf{left}) and different values of the regularization strength $\tau$ fixing $\mu=1/\sqrt{n}$ (\textbf{right}). The regularization function $g(\theta)$ is given by \eqref{eq:g-example}.}
	\label{fig:testloss-vs-tau-mu-mnist}
\end{figure}
\paragraph{Influence of the regularization parameters.} We investigate the performance of the GGN-SCORE-trained model for different values of the regularization smoothing parameter $\mu$ and the regularization strength $\tau$ on MNIST dataset. First, we fix $\mu = 1/\sqrt{n}$ and measure the performance of the trained model for different values of $\tau$. Similarly, we fix $\tau = 10^{-4}$ and measure the trained model's performance for varying values of $\mu$. In each of the two cases, we use $7$ different values of $\tau$ and $\mu$ as is respectively shown in \figref{fig:testloss-vs-tau-mu-mnist} and \figref{fig:accuracy-mnist}. As in the case with synthetic datasets, optimal choices for $\tau$ and $\mu$ are seen to necessarily yield good generalization of the model, and are such that give a relatively simple model and stable dynamics (as indicated by the number of zeros in the parameters of the final optimized model and the T-I measure). The total computation time to generate the results in \figref{fig:testloss-vs-tau-mu-mnist} and \figref{fig:accuracy-mnist} is $\sim23$ hours, $5$ minutes on CPU.
\begin{figure*}[h!]
	\centering
	\begin{subfigure}[b]{0.3\textwidth}
		\centering
		\includegraphics[width=\textwidth]{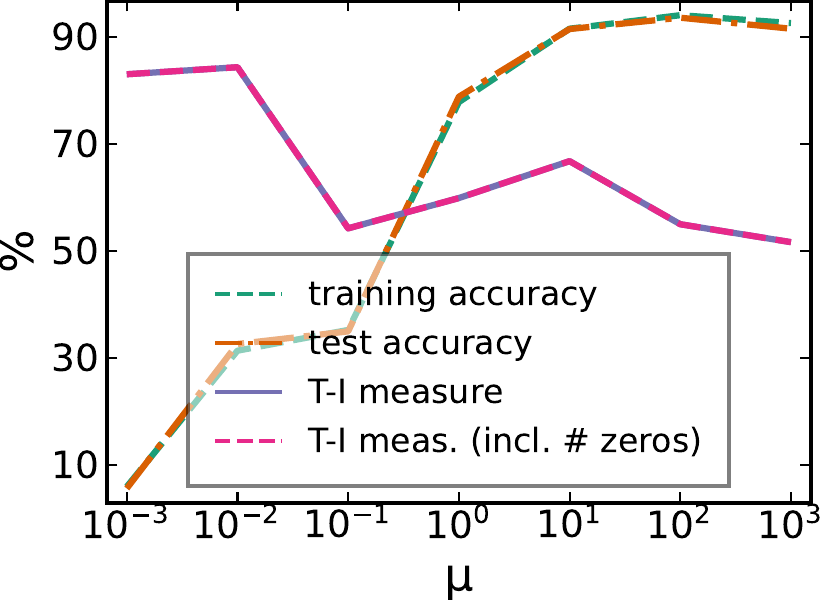}
	\end{subfigure}
	\hfil
	\begin{subfigure}[b]{0.3\textwidth}
		\centering
		\includegraphics[width=\textwidth]{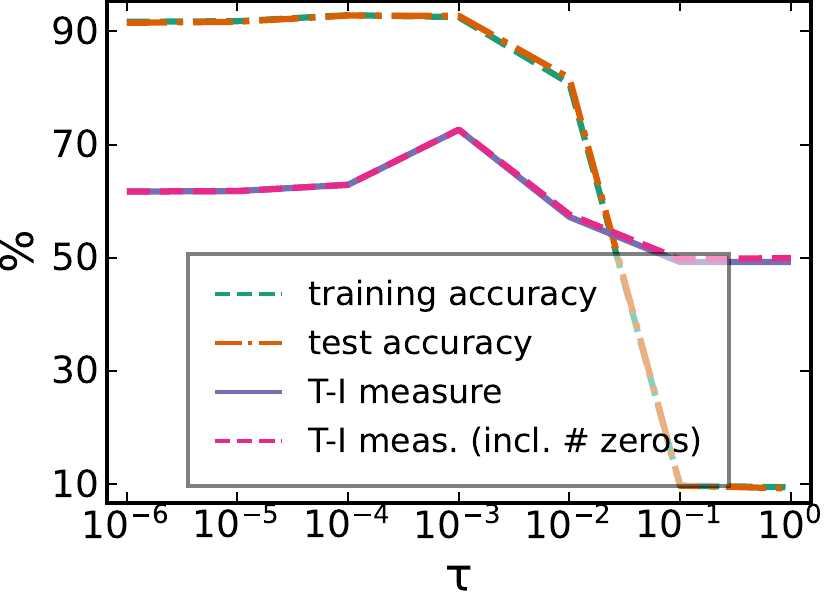}
	\end{subfigure}
	\caption{Evaluation of GGN-SCORE on MNIST dataset for different values of the regularization smoothing parameter $\mu$, fixing $\tau=10^{-4}$ (\textbf{left}), and different values of the regularization strength $\tau$, fixing $\mu=1/\sqrt{n}$ (\textbf{right}). The regularization function $g(\theta)$ is given by \eqref{eq:g-example}.}
	\label{fig:accuracy-mnist}
\end{figure*}
\section{Conclusion}
We studied a Generalized Gauss-Newton method for optimizing a two-layer neural network with \emph{explicit} regularization and for a specific type of two-layer neural network. We considered the class of generalized self-concordant regularization under which we proved convergence of the neural network predictions to the outputs of a given target function, and have quantified the decay of the problem's objective function throughout the training iterations. Our experimental findings revealed that good generalization of the optimized neural network model can be achieved with the regularization framework. In future research, we will further investigate the regularization framework with GPU-supported experiments for wider and deeper neural networks and higher dimensional datasets.


\bibliographystyle{unsrtnat}
\bibliography{references.bib}

\appendix

\section{Preliminary results}
\subsection{Lipschitz continuity of $J$}\label{app:Lipschitz-J}
Here, we look at the Lipschitz property of the Jacobian matrix $J$. For this, we need the following additional standard assumptions:
\begin{enumerate}[label=\enumlabel{J}, ref=\enumref{J}]
	\item $\|x\| = \|x^\top\| \le 1$.\label{ass:j1}
	\item $\exists L_\varrho$ such that $\|\varrho(\bar{x}) - \varrho(\tilde{x})\|\le L_\varrho\|\bar{x} - \tilde{x}\|$ and  $\|\varrho'(\bar{x}) - \varrho'(\tilde{x})\|\le L_\varrho\|\bar{x} - \tilde{x}\|$ for all $\bar{x},\tilde{x}\in\rr$.\label{ass:j2}
	\item $\exists L_v$ such that $\|v\| \le L_v$ at all time $t$ until the training is stopped.\label{ass:j3}
\end{enumerate}
Condition \ref{ass:j2} simply restates the Lipschitzness and smoothness assumptions in \ref{ass:a} explicitly. This kind of condition has been used, for example in \cite[Condition 3.1]{du2019gradient}, to show the stability of the training process of NNs via gradient descent. A consequence of the condition is that it also provides an upper bound on the gradients of $\varrho$, that is, $\|\varrho'(\bar{x})\|\le L_\varrho$ for all $\bar{x} \in \rr$ (see, \eg, \lemref{thm:grad-g-bound} below).
\begin{proposition}[Lipschitz constant of $J$; training both layers]
	Under assumptions \ref{ass:j1}, \ref{ass:j2} and \ref{ass:j3}, $J$ is $L_J$-Lipschitz continuous, where $L_J\triangleq m\kappa(n)(1+L_v) L_\varrho\sqrt{2}$.
\end{proposition}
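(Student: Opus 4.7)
\begin{proof*}[Proof proposal]
The plan is to write the Jacobian $J(\theta)\in\rr^{m\times p}$ row-by-row, bound the norm of each row of $J(\theta^1)-J(\theta^2)$ by $\|\theta^1-\theta^2\|$, and then aggregate over the $m$ rows. Since $\Phi(x;\theta)=\kappa(n)\sum_{i=1}^n v_i\varrho(u_i x)$, the $j$-th row of $J(\theta)$ is the concatenation, over $i\in[n]$, of the blocks
\begin{align*}
	\partial_{u_i}\Phi(x_j;\theta)=\kappa(n)\,v_i\,\varrho'(u_i x_j)\,x_j^\top\in\rr^{1\times n_0},\qquad
	\partial_{v_i}\Phi(x_j;\theta)=\kappa(n)\,\varrho(u_i x_j)\in\rr.
\end{align*}
Writing $\Delta u_i=u_i^1-u_i^2$, $\Delta v_i=v_i^1-v_i^2$, and $\Delta\theta=\theta^1-\theta^2$, I would bound each block separately.

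For the $v$-blocks, Lipschitzness of $\varrho$ from \ref{ass:j2} and $\|x_j\|\le 1$ from \ref{ass:j1} yield $\kappa(n)|\varrho(u_i^1 x_j)-\varrho(u_i^2 x_j)|\le \kappa(n)L_\varrho\|\Delta u_i\|$. For the $u$-blocks, I would add and subtract $v_i^2\varrho'(u_i^1 x_j)$ and combine the triangle inequality with \ref{ass:j2}--\ref{ass:j3} to get
\begin{align*}
\bigl|v_i^1\varrho'(u_i^1 x_j)-v_i^2\varrho'(u_i^2 x_j)\bigr|\le L_\varrho\bigl(L_v\|\Delta u_i\|+|\Delta v_i|\bigr),
\end{align*}
and then the outer factor $\|x_j^\top\|\le 1$ preserves this after tensoring with $x_j^\top$. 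Summing the squared block norms over $i$ and using Cauchy--Schwarz in the form $(L_v\|\Delta u_i\|+|\Delta v_i|)^2\le (L_v^2+1)(\|\Delta u_i\|^2+|\Delta v_i|^2)$, followed by the elementary inequality $L_v^2+1\le(1+L_v)^2$, I expect the bound
\begin{align*}
\|\text{row}_j(J(\theta^1)-J(\theta^2))\|^2\le \kappa(n)^2 L_\varrho^2\bigl[(1+L_v)^2+1\bigr]\|\Delta\theta\|^2\le 2\kappa(n)^2 L_\varrho^2(1+L_v)^2\|\Delta\theta\|^2,
\end{align*}
so each row is $\kappa(n)L_\varrho(1+L_v)\sqrt{2}$-Lipschitz.

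To conclude, I would pass to the Frobenius norm via the (loose but convenient) inequality $\|A\|_F=\sqrt{\sum_j\|\text{row}_j(A)\|^2}\le\sum_j\|\text{row}_j(A)\|\le m\,\max_j\|\text{row}_j(A)\|$, which introduces the factor $m$ and delivers $\|J(\theta^1)-J(\theta^2)\|_F\le m\kappa(n)(1+L_v)L_\varrho\sqrt{2}\,\|\Delta\theta\|$, matching $L_J$.

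The main obstacle I anticipate is purely bookkeeping: ensuring that the constant that emerges on the $u$-block is $(1+L_v)$ rather than the less clean $\sqrt{L_v^2+1}$, which requires choosing the right pairing of Cauchy--Schwarz and the dominance $L_v^2+1\le(1+L_v)^2$. Beyond that the argument is straightforward, since the only nontrivial analytic inputs are the Lipschitzness of $\varrho$ and $\varrho'$ supplied by \ref{ass:j2}, the uniform bound on $\|v\|$ from \ref{ass:j3}, and the normalization $\|x_j\|\le 1$ from \ref{ass:j1}.
\end{proof*}
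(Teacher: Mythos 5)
Your proposal is correct and follows essentially the same route as the paper: the same block decomposition of the Jacobian difference into $u$- and $v$-parts, the same add-and-subtract step with the Lipschitz bounds on $\varrho$ and $\varrho'$, the bounds $\|x_j\|\le 1$ and $|v_i|\le L_v$, and the same elementary inequality (your per-block Cauchy--Schwarz playing the role of the paper's Peter--Paul step) producing the factor $\sqrt{2}$ when passing from $\|\Delta u\|,\|\Delta v\|$ to $\|\Delta\theta\|$. The only cosmetic difference is that you aggregate row-wise in the Frobenius norm and use the deliberately loose chain $\sqrt{\sum_j a_j^2}\le\sum_j a_j\le m\max_j a_j$, which makes the origin of the factor $m$ explicit and reproduces the stated constant $L_J=m\kappa(n)(1+L_v)L_\varrho\sqrt{2}$.
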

\begin{proof}
	Let $(\bar{u}, \bar{v}) \equiv \bar{\theta}, (\tilde{u}, \tilde{v}) \equiv \tilde{\theta}$ for any $\bar{\theta}, \tilde{\theta} \in \rr^p$. We have
	\begin{align}
		\norm{J(\bar{\theta}) - J(\tilde{\theta})} &\le \kappa(n) \left(\sum_{i=1}^n \abs{\varrho'(\bar{u}_i x)\bar{v}_ix^\top - \varrho'(\tilde{u}_i x)\tilde{v}_ix^\top} + \sum_{i=1}^n \abs{\varrho(\bar{u}_i x) - \varrho(\tilde{u}_i x)}\right)\nonumber\\
		&\le \kappa(n) \left(\sum_{i=1}^n \left(\abs{\varrho'(\bar{u}_i x)\bar{v}_ix^\top - \varrho'(\tilde{u}_i x)\bar{v}_ix^\top} + \abs{\varrho'(\tilde{u}_i x)\bar{v}_ix^\top - \varrho'(\tilde{u}_i x)\tilde{v}_ix^\top}\right)\right.\nonumber\\ &\left.\qquad+ \sum_{i=1}^n \abs{\varrho(\bar{u}_i x) - \varrho(\tilde{u}_i x)}\right)\nonumber\\
		&\le \kappa(n) \left(\sum_{i=1}^n \left(\abs{\varrho'(\bar{u}_i x) - \varrho'(\tilde{u}_i x)}\abs{\bar{v}_i}\norm{x} + \abs{\bar{v}_i - \tilde{v}_i}\abs{\varrho'(\tilde{u}_i x)}\norm{x}\right) + \sum_{i=1}^n \abs{\varrho(\bar{u}_i x) - \varrho(\tilde{u}_i x)}\right)\nonumber\\
		&\le m\kappa(n)\left((1+L_v)L_\varrho\norm{\bar{u} - \tilde{u}} + L_\varrho\norm{\bar{v} - \tilde{v}}\right)\nonumber\\
		&\le m\kappa(n)(1+L_v)L_\varrho\left(\norm{\bar{u} - \tilde{u}} + \norm{\bar{v} - \tilde{v}}\right)\nonumber\\
		&= m\kappa(n)(1+L_v)L_\varrho\left(\norm{(\bar{u}, 0) - (\tilde{u},0)} + \norm{(0,\bar{v}) - (0,\tilde{v})}\right).\label{eq:exprJ1}
	\end{align}
	Next, we recall Peter-Paul inequality for two quantities $a,b \in \rnn$ which reads $2ab \le a^2 + b^2$, from which we obtain
	\begin{align}
		(a+b)^2 = a^2 + 2ab + b^2 \le a^2 + a^2 + b^2 + b^2  = 2(a^2 + b^2).\label{eq:exprJ2}
	\end{align}
	We also derive the expression
	\begin{align}
		\norm{(\bar{u}, 0) - (\tilde{u},0)}^2 + \norm{(0,\bar{v}) - (0,\tilde{v})}^2 &= \norm{(\bar{u}, 0)}^2 - 2\langle (\bar{u},0), (\tilde{u},0)\rangle + \norm{(\tilde{u},0)}^2\nonumber\\ &\quad+ \norm{(0, \bar{v})}^2 - 2\langle (0,\bar{v}), (0,\tilde{v})\rangle + \norm{(0,\tilde{v})}^2\nonumber\\
		&= \norm{(\bar{u}, \bar{v})}^2 - 2\langle (\bar{u},\bar{v}), (\tilde{u},\tilde{v})\rangle + \norm{(\tilde{u}, \tilde{v})}^2\nonumber\\
		&= \norm{(\bar{u}, \bar{v}) - (\tilde{u}, \tilde{v})}^2.\label{eq:exprJ3}
	\end{align}
	Now, using \eqref{eq:exprJ2} and \eqref{eq:exprJ3} in \eqref{eq:exprJ1} with $a=\norm{(\bar{u}, 0) - (\tilde{u},0)}$ and $b=\norm{(0,\bar{v}) - (0,\tilde{v})}$, we get
	\begin{align*}
		\norm{J(\bar{\theta}) - J(\tilde{\theta})} \le m\kappa(n)(1+L_v)L_\varrho\sqrt{2}\norm{(\bar{u}, \bar{v}) - (\tilde{u}, \tilde{v})},
	\end{align*}
	which proves the result.
\end{proof}
\subsection{Useful results on the generalized self-concordance of $g$}\label{app:sc-properties}
We define the following metric term for the regularization function $g$ (under condition \ref{ass:sc}). As is customary, our results are restricted to the case $\nu\in[2,3]$.
\begin{align}\label{eq:d-metric}
	d_\nu(\bar{\theta},\tilde{\theta}) \triangleq \begin{cases}
		M_g\norm{\tilde{\theta}-\bar{\theta}} &\text{if } \nu = 2,\\
		\left(\frac{\nu}{2}-1\right)M_g\norm{\tilde{\theta}-\bar{\theta}}_2^{3-\nu}\norm{\tilde{\theta}-\bar{\theta}}_{\bar{\theta}}^{\nu-2} &\text{if }\nu>2.
	\end{cases}
\end{align}
\begin{lemma}\cite[Proposition 10]{sun2019generalized}\label{thm:g-bound0}
	Under condition \ref{ass:sc}, we have for any $\bar{\theta},\tilde{\theta} \in \dom g$
	\begin{align}
		\omega_\nu(-d_\nu(\bar{\theta},\tilde{\theta}))\norm{\tilde{\theta}-\bar{\theta}}_{\bar{\theta}}^2 \le g(\tilde{\theta}) - g(\bar{\theta}) - \langle\gradn g(\bar{\theta}), \tilde{\theta}-\bar{\theta}\rangle \le \omega_\nu(d_\nu(\bar{\theta},\tilde{\theta}))\norm{\tilde{\theta}-\bar{\theta}}_{\bar{\theta}}^2,
	\end{align}
	in which, if $\nu>2$, the right-hand side inequality holds if $d_\nu(\bar{\theta},\tilde{\theta}) < 1$, and
	\begin{align}\label{eq:omega-nu}
		\omega_\nu(r) \triangleq \begin{cases}
			\frac{\exp(r)-r-1}{r^2} &\text{if } \nu = 2,\\
			\frac{-r - \ln(1-r)}{r^2} &\text{if } \nu=3,\\
			\frac{(1-r)\ln(1-r)+r}{r^2} &\text{if } \nu=4,\\
			\left(\frac{\nu-2}{4-\nu}\right)\frac{1}{r}\left[\frac{\nu-2}{2(3-\nu)r}\left((1-r)\frac{2(3-\nu)}{2-\nu}-1\right)-1\right] &\text{otherwise}.
		\end{cases}
	\end{align}
\end{lemma}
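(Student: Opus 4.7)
The plan is to prove the stated Bregman-type sandwich inequality by reducing the multivariate GSC condition to a scalar differential inequality along the segment $\theta(t) = \bar\theta + t(\tilde\theta - \bar\theta)$, $t\in[0,1]$, integrating this inequality to control the Hessian's behaviour along the segment, and then applying the Taylor remainder formula. Concretely, I would set $d \triangleq \tilde\theta - \bar\theta$ and define the scalar function $\phi(t) \triangleq \langle \hessn g(\theta(t))\, d, d\rangle = \norm{d}_{\theta(t)}^2$. Differentiating under standard regularity (provided by $g\in\cc{3}$ on $\dom g$) gives $\phi'(t) = \langle \thirdn g(\theta(t))[d]\,d, d\rangle$, so the GSC condition in Definition~\ref{def:gsc-rn}, applied with $u=v=d$, yields the key pointwise estimate $|\phi'(t)| \le M_g\, \phi(t)^{\nu/2}\, \norm{d}^{\,3-\nu}$.

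Next I would integrate this scalar differential inequality separately for $\nu=2$ and $\nu>2$. For $\nu=2$ the inequality becomes $|\phi'(t)/\phi(t)| \le M_g\norm{d}$, so Gr\"onwall gives the two-sided bound $\phi(0)\,e^{-M_g t\norm{d}} \le \phi(t) \le \phi(0)\, e^{M_g t\norm{d}}$, with $M_g\norm{d} = d_\nu(\bar\theta,\tilde\theta)$ by \eqref{eq:d-metric}. For $\nu>2$, writing $|\phi'/\phi^{\nu/2}|\le M_g\norm{d}^{3-\nu}$ and integrating from $0$ to $t$ produces a bound of the form $\phi(t) \le \phi(0)/\bigl(1 - t\,d_\nu(\bar\theta,\tilde\theta)\bigr)^{2/(\nu-2)}$ for the upper estimate (valid precisely while $t\,d_\nu<1$, which is why the assumption $d_\nu(\bar\theta,\tilde\theta)<1$ is needed), and the corresponding factor $\bigl(1 + t\,d_\nu\bigr)^{-2/(\nu-2)}$ for the lower estimate. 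The scaling $(\nu/2-1)M_g\norm{d}^{3-\nu}\norm{d}_{\bar\theta}^{\,\nu-2}$ appearing in $d_\nu$ comes out naturally from the constants produced by integrating $\phi^{-\nu/2}$.

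The final step uses the Taylor identity
\begin{equation*}
g(\tilde\theta) - g(\bar\theta) - \langle \gradn g(\bar\theta),\, \tilde\theta-\bar\theta\rangle \;=\; \int_0^1 (1-s)\,\phi(s)\, ds,
\end{equation*}
which follows from twice-differentiating $s\mapsto g(\theta(s))$ and integrating by parts. Substituting the upper and lower envelopes on $\phi(s)$ derived above, factoring out $\phi(0) = \norm{d}_{\bar\theta}^2$, and changing variables $r = d_\nu(\bar\theta,\tilde\theta)$ leaves the $r$-dependent integrals
\begin{equation*}
\int_0^1 (1-s)\,e^{\pm r s}\,ds \quad\text{and}\quad \int_0^1 (1-s)\bigl(1\mp r s\bigr)^{-2/(\nu-2)}\,ds,
\end{equation*}
which evaluate in closed form to exactly the four cases listed in \eqref{eq:omega-nu}: the exponential form for $\nu=2$, the $-r-\ln(1-r)$ form for $\nu=3$, the $(1-r)\ln(1-r)+r$ form for $\nu=4$, and the general rational-power expression otherwise. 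The main obstacle is carefully handling the case analysis on $\nu$ and verifying that the ODE bounds are sharp in each regime so that the resulting $\omega_\nu$ takes exactly the stated form; the asymmetry between the upper bound (needing $d_\nu<1$ when $\nu>2$) and the always-valid lower bound is also where one must be careful, since the sign of $\phi'(t)$ is not controlled, only its magnitude.
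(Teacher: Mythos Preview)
The paper does not supply its own proof of this lemma; it is quoted verbatim as \cite[Proposition 10]{sun2019generalized} and used as a black box in the proof of \thmref{thm:main-result}. Your proposal reproduces the standard argument from that reference: restrict to the segment, derive the scalar differential inequality $|\phi'(t)|\le M_g\,\phi(t)^{\nu/2}\norm{d}^{3-\nu}$ from the GSC definition with $u=v=d$, integrate it (Gr\"onwall for $\nu=2$, a power-type comparison for $\nu>2$) to obtain two-sided control of $\phi(t)$ in terms of $\phi(0)=\norm{d}_{\bar\theta}^2$ and $d_\nu$, and then insert these envelopes into the integral Taylor remainder $\int_0^1(1-s)\phi(s)\,ds$. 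The resulting one-dimensional integrals evaluate to the listed $\omega_\nu$ expressions, and the restriction $d_\nu<1$ for the upper bound when $\nu>2$ arises exactly where you indicate, namely from the blow-up of $(1-t\,d_\nu)^{-2/(\nu-2)}$. Your sketch is correct and matches the source proof.
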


In case $g$ is nonsmooth and hence does not satisfy the condition \ref{ass:sc} but is closed, proper and convex, the following result from \cite[Proposition 2]{adeoye2024self} shows some properties of a self-concordant smoothing function for $g$ constructed in the sense of \defnref{def:smooth-function}.
\begin{lemma}\label{thm:g-properties}
	Let $\bar{g},h$ be two functions in $\pclsc{\rr^p}$. Suppose that $h$ is $(M_h,\nu)$-GSC and supercoercive, and define $g \triangleq \infconv{\bar{g}}{h_\mu}$ for all $\mu \in \rp$, where $h_\mu(\cdot) \triangleq \mu h\left(\frac{\cdot}{\mu}\right)$ and $\infconv{\bar{g}}{h_\mu}$ denotes the infimal convolution of $\bar{g}$ and $h_\mu$ defined by
	\begin{align}
		(\infconv{\bar{g}}{h_\mu})(\bar{\theta}) \triangleq \inf\limits_{\tilde{\theta}\in\rr^p}\left\{\bar{g}(\tilde{\theta})+h_\mu(\bar{\theta}-\tilde{\theta})\right\}. \label{eq:infconv}
	\end{align}
	Then,
	\begin{enumerate}[(i)]
		\item $g\in \pclsc{\rr^p}$ and is exact. \label{thm:exact}
		\item $g$ is $(M_g,\nu)$-GSC with
		\begin{align*}
			M_g =
			\begin{cases}
				p^{\frac{3-\nu}{2}}\mu^{\frac{\nu}{2}-2}M_h, \qquad \text{if } \nu \in (0,3],\\
				\mu^{4-\frac{3\nu}{2}}M_h, \qquad \text{if } \nu >3.
			\end{cases}
		\end{align*}
		\item $g$ is locally Lipschitz continuous.\label{thm:g-lip}
	\end{enumerate}
\end{lemma}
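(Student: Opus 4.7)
The plan is to establish the three claims in sequence, relying on standard convex-analytic facts for parts (i) and (iii) and on careful chain-rule plus inf-convolution Hessian calculus for part (ii).

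For (i), I would first observe that $h_\mu$ inherits supercoercivity from $h$: since $h_\mu(x)/\|x\| = h(x/\mu)/\|x/\mu\| \to \infty$ as $\|x\|\to\infty$. Both $\bar{g}$ and $h_\mu$ lie in $\pclsc{\rr^p}$ and $h_\mu$ is supercoercive, so the infimal convolution $g = \infconv{\bar g}{h_\mu}$ is itself proper convex lower semicontinuous, and the infimum in \eqref{eq:infconv} is attained at each $\bar\theta\in\dom g$; this is a standard conclusion (e.g., Rockafellar--Wets, Theorem 1.18, or Bauschke--Combettes, Proposition 12.14). Properness is checked by picking any $\bar\theta_0\in\dom\bar g$, which gives $g(\bar\theta_0)\le \bar g(\bar\theta_0)+h_\mu(0)<+\infty$, while supercoercivity rules out $g\equiv -\infty$.

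For (ii), I would proceed in two stages. Stage one computes the GSC constant of $h_\mu$ alone: from $h_\mu(x)=\mu h(x/\mu)$ one has $\nabla^2 h_\mu(x)=\mu^{-1}\nabla^2 h(x/\mu)$ and $\nabla^3 h_\mu(x)[v]=\mu^{-2}\nabla^3 h(x/\mu)[v]$, and the corresponding local norm satisfies $\|u\|_{x,h_\mu}^2 = \mu^{-1}\|u\|_{x/\mu,h}^2$. Plugging these scaling relations into the GSC inequality for $h$ yields that $h_\mu$ is $(\mu^{\nu/2-2}M_h,\nu)$-GSC. Stage two transfers this to $g$ via inf-convolution: smoothness of $h_\mu$ and the optimality condition $\nabla\bar g(\tilde\theta^*)=\nabla h_\mu(\bar\theta-\tilde\theta^*)$ at the (unique) minimizer of \eqref{eq:infconv}, combined with the implicit function theorem, give $g\in\cc{3}$ near any interior point together with the Hessian identity $\nabla^2 g(\bar\theta)=\bigl(\nabla^2\bar g(\tilde\theta^*)^{-1}+\nabla^2 h_\mu(\bar\theta-\tilde\theta^*)^{-1}\bigr)^{-1}$ (interpreted in a suitable generalized sense when $\bar g$ is not $\cc{2}$). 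In particular $\nabla^2 g(\bar\theta)\preceq \nabla^2 h_\mu(\bar\theta-\tilde\theta^*)$, and a parallel estimate holds for $\nabla^3 g$ in terms of $\nabla^3 h_\mu$. Substituting these into the GSC inequality of \defnref{def:gsc-rn} and reconciling the $h_\mu$-local norms with the $g$-local norms required there introduces the stated dimensional factor $p^{(3-\nu)/2}$ in front of $\mu^{\nu/2-2}M_h$ for $\nu\in(0,3]$; the case $\nu>3$ follows from the analogous scaling calculation which produces $\mu^{4-3\nu/2}M_h$ with no dimensional factor.

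For (iii), once (ii) is in hand, $g\in\cc{3}$ on the interior of $\dom g$ and is in particular locally Lipschitz there. For completeness one can also invoke the classical fact (Rockafellar, \emph{Convex Analysis}, Theorem 10.4) that any proper convex function is locally Lipschitz on the interior of its effective domain, together with the observation from (i) that this interior is nonempty under the supercoercivity hypothesis on $h_\mu$ and properness of $\bar g$.

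The main obstacle is Stage two of (ii). Writing down a clean inf-convolution Hessian identity requires care when $\bar g$ is only convex (not twice differentiable), forcing work with subgradient Jacobians or an infimally-regularized surrogate. The extraction of the factor $p^{(3-\nu)/2}$ is the other delicate step: it traces to the mixed exponents $\nu-2$ and $3-\nu$ in the GSC inequality, which force a conversion between the local norm induced by $\nabla^2 g$ and the Euclidean norm on $\rr^p$; that conversion ultimately invokes a spectral bound of the form $\|v\|\le \sqrt{p}\,\lambda_{\max}(\nabla^2 g(\bar\theta))^{-1/2}\|v\|_{\bar\theta}$ applied at the minimizing $\tilde\theta^*$.
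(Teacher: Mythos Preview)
The paper does not actually prove this lemma: it is quoted verbatim as \cite[Proposition~2]{adeoye2024self} and no argument is supplied in the appendix (the only proved auxiliary results are the Lipschitz constant of $J$, \lemref{thm:grad-g-bound}, \lemref{thm:step-dist}, \lemref{thm:L-Lipschitz}, \lemref{thm:G-positive}, and \thmref{thm:main-result}). There is therefore nothing in the paper to compare your proposal against.

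As a standalone assessment: parts (i) and (iii) of your sketch are fine and essentially cite the right standard facts. Part (ii), Stage two, has a real gap that you partly flag yourself. The Hessian identity $\nabla^2 g = (\nabla^2\bar g^{-1}+\nabla^2 h_\mu^{-1})^{-1}$ is not available when $\bar g$ is merely in $\pclsc{\rr^p}$; the correct route is to use that $\nabla g(\bar\theta)=\nabla h_\mu(\bar\theta-\tilde\theta^*(\bar\theta))$ with $\tilde\theta^*$ the proximal map of $\bar g$, and then differentiate through the (firmly nonexpansive) map $\bar\theta\mapsto \bar\theta-\tilde\theta^*(\bar\theta)$. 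Also, the spectral inequality you invoke, $\|v\|\le \sqrt{p}\,\lambda_{\max}(\nabla^2 g)^{-1/2}\|v\|_{\bar\theta}$, is not the right bound (the elementary estimate is $\|v\|\le \lambda_{\min}(\nabla^2 g)^{-1/2}\|v\|_{\bar\theta}$, with no $\sqrt{p}$); the dimensional factor $p^{(3-\nu)/2}$ in the statement must come from a different source, so the mechanism you describe for it does not work as written.
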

\begin{lemma}\label{thm:grad-g-bound}
	Let $g$ be a convex and (locally) $L$-Lipschitz function. Then,
	\begin{align*}
		\norm{\gradn{g}(\bar{\theta})} \le L,
	\end{align*}
	for some $\bar{\theta}$ in a set $\calX \subset \rr^p$.
\end{lemma}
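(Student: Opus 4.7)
The plan is to combine the convexity of $g$ with the local Lipschitz hypothesis and then optimize over admissible test directions to extract the norm bound on the gradient. Fix $\bar{\theta}\in\calX$ and assume $\gradn g(\bar{\theta})\ne 0$ (otherwise the claim is trivial). Because $g$ is locally $L$-Lipschitz, there exists $r>0$ such that $|g(\bar{\theta}+h)-g(\bar{\theta})|\le L\|h\|$ for every $h$ with $\|h\|\le r$.

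Next I would invoke the standard first-order characterization of convexity, which in the differentiable case reads
\begin{align*}
	g(\bar{\theta}+h) \ge g(\bar{\theta}) + \langle \gradn g(\bar{\theta}),h\rangle, \qquad \forall h\in\rr^p.
\end{align*}
Rearranging and using the local Lipschitz bound for $\|h\|\le r$, this yields
\begin{align*}
	\langle \gradn g(\bar{\theta}),h\rangle \le g(\bar{\theta}+h) - g(\bar{\theta}) \le L\|h\|.
\end{align*}

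To pull out $\|\gradn g(\bar{\theta})\|$ I would choose the specific direction $h = t\,\gradn g(\bar{\theta})/\|\gradn g(\bar{\theta})\|$ with $0<t\le r$, so that the left-hand side becomes $t\|\gradn g(\bar{\theta})\|$ while the right-hand side equals $Lt$. Dividing by $t>0$ gives the desired estimate $\|\gradn g(\bar{\theta})\|\le L$.

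There is essentially no obstacle here: the only mild subtlety is confirming that the Lipschitz neighborhood is nonempty so that the test direction $h$ is admissible, which follows from the local Lipschitz hypothesis itself. If one prefers to drop differentiability, the same argument goes through verbatim for any subgradient $\xi\in\partial g(\bar{\theta})$ using the subgradient inequality in place of the gradient inequality, yielding $\|\xi\|\le L$ for every $\xi\in\partial g(\bar{\theta})$.
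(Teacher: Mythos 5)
Your proof is correct and follows essentially the same route as the paper's: test the convexity (first-order/subgradient) inequality at a point displaced from $\bar{\theta}$ along the gradient direction, then bound the resulting function difference by the local Lipschitz constant and divide out the step length. The only cosmetic differences are your normalization of the test direction and the remark about subgradients, which do not change the argument.
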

\begin{proof}
	Take some $\tilde{\theta} = \bar{\theta} + \alpha\gradn{g(\bar{\theta})}$ for $\alpha\in\rp$ small enough. By the convexity and Lipschitzness of $g$, we have
	\begin{align*}
		\norm{\alpha\gradn{g}(\bar{\theta})}^2 &= \alpha^2\abs{\langle\gradn{g}(\bar{\theta}),\gradn{g}(\bar{\theta})\rangle}\\
		&= \alpha\abs{\langle \tilde{\theta}-\bar{\theta}, \gradn{g}(\bar{\theta})\rangle}\\
		&\le \alpha\abs{g(\tilde{\theta}) - g(\bar{\theta})}\\
		&\le \alpha L\abs{\tilde{\theta} - \bar{\theta}}\\
		&= \alpha^2L\norm{\gradn{g(\bar{\theta})}},
	\end{align*}
	which completes the proof.
\end{proof}
\section{Proof of the main result}\label{app:proof-main}
\paragraph{Detailed regularity assumptions in \ref{ass:r}.} We detail missing regularity terms in condition \ref{ass:r} as follows. For this, we define the ball $\calB_{r_0}(\theta_0)\subset\calE_{r}(\theta_0)$ for some initialization $\theta_0$. Note that, corresponding to the ellipsoid $\calE_{r}(\theta_0)$, we compute local norms with respect to $g$.
\begin{enumerate}[label=\enumlabel{RR}, ref=\enumref{RR}]
	\item $\hat{R}_s(\Phi(\cdot,\bar{\theta})) \ge \hat{R}_s(\Phi(\cdot,\tilde{\theta})) + \langle \gradn_\Phi{\hat{R}_s(\Phi(\cdot,\tilde{\theta}))}, \Phi(\cdot,\bar{\theta}) - \Phi(\cdot,\tilde{\theta})\rangle + \frac{\gamma_R}{2}\norm{\Phi(\cdot, \bar{\theta}) - \Phi(\cdot, \tilde{\theta})}^2$ $\forall \bar{\theta}, \tilde{\theta} \in \rr^p$, and $\exists B_R, D_R$ such that $\norm{\gradn_\Phi{\hat{R}_s}(\Phi(\cdot,\bar{\theta}))}\le B_R$, $\norm{\hessn_\Phi{\hat{R}_s}(\Phi(\cdot,\bar{\theta}))}_{op}\le D_R$ $\forall \bar{\theta} \in \calB_{r_0}(\theta_0)$.\label{ass:r1}
	\item $d_gI \le H_t \le D_gI$ with $D_g \ge d_g >0$, $d_qI \le \Qaug_t \le D_qI$ with $D_q\ge d_q \ge 0$, and $\norm{\eaug_t} \le \beta$ $\forall \theta_t \in \calB_{r_0}(\theta_0)$.\label{ass:r2}
\end{enumerate}
Using the Lipschitness of $\varrho$, one can easily find some $B_\Phi$ satisfying $\norm{\Phi(\cdot, \bar{\theta}) - \Phi(\cdot, \tilde{\theta})}\le B_\Phi\norm{\bar{\theta} - \tilde{\theta}}$ for some $\bar{\theta}, \tilde{\theta}\in \rr^p$ at least near the initialization. Hence, we do not impose this regularity property as an additional assumption. Subsequently, we recall the following notations: $L_{D_t} \triangleq \frac{\alpha_t\beta\hat{\beta}_1D_g}{d_g(D_g + d_q\hat{\beta}_m^2)}$, $\xi \triangleq B_RB_\Phi + B_g$, $\vartheta \triangleq B_\Phi^2(\gamma_R - D_R)$, $\varpi_t \triangleq \omega_\nu(d_\nu(\theta_t,\theta_{t+1})) - \omega_\nu(-d_\nu(\theta_t,\theta_{t+1}))$, $\hat{\beta}_m\triangleq \smin{\Jaug_t}$. We also introduce the notations $\hat{\beta}_1\triangleq \smax{\Jaug_t}$ and $\delta_t \triangleq \theta_{t+1} - \theta_t$.

\begin{lemma}\label{thm:step-dist}
	Under assumption \ref{ass:r2}, we have
	\begin{align*}
		\norm{\delta_t} \le L_{D_t}, \qquad \norm{\delta_t}_{\theta_t} \le \sqrt{D_g}L_{D_t}.
	\end{align*}
\end{lemma}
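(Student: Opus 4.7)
The plan is to start from the closed-form update \eqref{eq:ggn-step-approx}, $\delta_t = -\alpha_t H_t^{-1}\Jaug_t^\top(I+\Qaug_t S_t)^{-1}\eaug_t$ with $S_t \triangleq \Jaug_t H_t^{-1}\Jaug_t^\top$, and pass to operator norms,
\begin{align*}
\|\delta_t\| \le \alpha_t\,\|H_t^{-1}\|_{op}\,\|\Jaug_t^\top\|_{op}\,\|(I+\Qaug_t S_t)^{-1}\|_{op}\,\|\eaug_t\|.
\end{align*}
Three of the four factors are immediate from \ref{ass:r2} and the definitions of the singular values: $\|H_t^{-1}\|_{op}\le 1/d_g$, $\|\Jaug_t^\top\|_{op}=\hat{\beta}_1$, and $\|\eaug_t\|\le\beta$. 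Matching the target $L_{D_t}$ therefore reduces to establishing
\begin{align*}
\|(I+\Qaug_t S_t)^{-1}\|_{op} \;\le\; \frac{D_g}{D_g + d_q\hat{\beta}_m^2}.
\end{align*}

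For this, I would first place $S_t$ in the Loewner order. Using $H_t\preceq D_g I$ and that $\Jaug_t$ has full row rank in the overparameterized regime, for any unit vector $u$,
\begin{align*}
u^\top S_t u \;=\; \|H_t^{-1/2}\Jaug_t^\top u\|^2 \;\ge\; \frac{\|\Jaug_t^\top u\|^2}{D_g} \;\ge\; \frac{\hat{\beta}_m^2}{D_g},
\end{align*}
so $S_t \succeq (\hat{\beta}_m^2/D_g)\,I$. Since $\Qaug_t S_t$ is not symmetric, I would next exploit the similarity $\Qaug_t S_t \sim \Qaug_t^{1/2} S_t \Qaug_t^{1/2}$ (via $R\mapsto \Qaug_t^{1/2}R\Qaug_t^{-1/2}$); the right-hand side is symmetric PSD and, for any unit $v$,
\begin{align*}
v^\top \Qaug_t^{1/2}S_t\Qaug_t^{1/2}v \;=\; (\Qaug_t^{1/2}v)^\top S_t(\Qaug_t^{1/2}v) \;\ge\; \tfrac{\hat{\beta}_m^2}{D_g}\,v^\top\Qaug_t v \;\ge\; \tfrac{d_q\hat{\beta}_m^2}{D_g}.
\end{align*}
Since similarity preserves spectra, every eigenvalue of $I+\Qaug_t S_t$ is at least $(D_g+d_q\hat{\beta}_m^2)/D_g$, which yields the required operator-norm bound on the inverse (sharp when $\Qaug_t$ is a scalar multiple of identity, e.g.\ in the squared-loss case driving the experiments). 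Combining the four factors gives $\|\delta_t\|\le L_{D_t}$.

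The local-norm bound will follow by one more application of $H_t\preceq D_g I$:
\begin{align*}
\|\delta_t\|_{\theta_t}^2 \;=\; \delta_t^\top H_t\delta_t \;\le\; D_g\|\delta_t\|^2 \;\le\; D_g L_{D_t}^2,
\end{align*}
so $\|\delta_t\|_{\theta_t}\le \sqrt{D_g}\,L_{D_t}$. The hard part will be the passage from an eigenvalue lower bound on the non-symmetric matrix $\Qaug_t S_t$ to an operator-norm bound on $(I+\Qaug_t S_t)^{-1}$; the similarity with the symmetric PSD representative $\Qaug_t^{1/2}S_t\Qaug_t^{1/2}$ is what keeps the argument clean, and all remaining steps are routine submultiplicativity of operator norms together with Loewner monotonicity against the bounds supplied by \ref{ass:r2}.
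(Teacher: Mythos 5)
Your route is structurally the same as the paper's: expand $\delta_t$ from \eqref{eq:ggn-step-approx}, bound $\norm{H_t^{-1}}\le 1/d_g$, $\norm{\Jaug_t^\top}=\hat{\beta}_1$ and $\norm{\eaug_t}\le\beta$ via \ref{ass:r2}, isolate the factor $(I+\Qaug_t S_t)^{-1}$ with $S_t\triangleq\Jaug_t H_t^{-1}\Jaug_t^\top$, and finish with $\norm{\delta_t}_{\theta_t}=\norm{H_t^{1/2}\delta_t}\le\sqrt{D_g}\norm{\delta_t}$. The paper's proof records precisely these estimates, asserting $\norm{H_t^{-1}\Jaug_t^\top}\le\hat{\beta}_1/d_g$ and $\norm{I+\Qaug_t\Jaug_t H_t^{-1}\Jaug_t^\top}\ge 1+d_q\hat{\beta}_m^2/D_g$ and combining them; your write-up attempts to justify the middle estimate more explicitly, which is where the trouble lies.

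The gap is in the inequality $\norm{(I+\Qaug_t S_t)^{-1}}_{op}\le D_g/(D_g+d_q\hat{\beta}_m^2)$. First, the similarity you invoke is not available: by construction the last diagonal entry of $\Qaug_t$ is zero (this is used explicitly in the proof of Lemma~\ref{thm:G-positive}), so $d_q=0$, $\Qaug_t$ is singular and $\Qaug_t^{-1/2}$ does not exist; the repair is to conjugate by $S_t^{1/2}$ instead (legitimate when $H_t\succ 0$ and $\Jaug_t$ has full row rank), which identifies the spectrum of $\Qaug_t S_t$ with that of $S_t^{1/2}\Qaug_t S_t^{1/2}\succeq 0$. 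Second, and more seriously, a lower bound on the eigenvalues of the non-normal matrix $I+\Qaug_t S_t$ does not bound $\norm{(I+\Qaug_t S_t)^{-1}}_{op}=1/\smin{I+\Qaug_t S_t}$: similarity preserves spectra but not singular values, and carrying the conjugation through the norm costs a factor of order $\sqrt{\smax{S_t}/\smin{S_t}}$. Concretely, with $\Qaug=\diag(1,0)$ and $S=\left(\begin{smallmatrix}1&a\\ a&a^2+1\end{smallmatrix}\right)\succ 0$ one gets $I+\Qaug S=\left(\begin{smallmatrix}2&a\\ 0&1\end{smallmatrix}\right)$, whose eigenvalues are $2$ and $1$ while $\norm{(I+\Qaug S)^{-1}}\ge a/2$ is arbitrarily large. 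To be fair, the paper's own proof leans on the same soft spot (its lower bound on $\norm{I+\Qaug_t S_t}$ controls the largest singular value, not the smallest), so you are no less careful than the source; but as written the step is not established. A fully rigorous version bounds the whole product rather than the inverse factor alone, using the identity $H_t^{-1}\Jaug_t^\top(I+\Qaug_t\Jaug_t H_t^{-1}\Jaug_t^\top)^{-1}=(\Jaug_t^\top\Qaug_t\Jaug_t+H_t)^{-1}\Jaug_t^\top$ underlying the equivalence of \eqref{eq:ggn-reg} and \eqref{eq:ggn-step-approx}; since $\Jaug_t^\top\Qaug_t\Jaug_t\succeq 0$ and $H_t\succeq d_gI$, this gives $\norm{\delta_t}\le\alpha_t\beta\hat{\beta}_1/d_g$, which coincides with $L_{D_t}$ in the case $d_q=0$ forced by the augmentation and otherwise differs only by the factor $D_g/(D_g+d_q\hat{\beta}_m^2)\le 1$. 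Your final step $\norm{\delta_t}_{\theta_t}\le\sqrt{D_g}\norm{\delta_t}$ is correct and matches the paper.
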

\begin{proof}
	We obtain the following estimate
	\begin{align}
		\norm{H_t^{-1}\Jaug_t^\top} &\le \norm{H_t^{-1}}\norm{\Jaug_t^\top} \le \frac{\beta_1}{d_g}\label{eq:HJ_est}.
	\end{align}
	We have also
	\begin{align}
		\norm{I + \Qaug_t\Jaug_t H_t^{-1}\Jaug_t^\top} \ge 1 + \frac{d_q\hat{\beta}_m^2}{D_g}.\label{eq:IQJHJ_est}
	\end{align}
	From \eqref{eq:ggn-step-approx}, we have
	\begin{align}
		\norm{\delta_t} \le \alpha_t \norm{H_t^{-1}\Jaug_t^\top(I+\Qaug_t \Jaug_t H_t^{-1}\Jaug_t^\top)^{-1}\eaug_t}.\label{eq:stepnorm}
	\end{align}
	Using \eqref{eq:HJ_est} and \eqref{eq:IQJHJ_est} in \eqref{eq:stepnorm}, we obtain
	\begin{align*}
		\norm{\delta_t} \le L_{D_t}.
	\end{align*}
	The result follows, noting that $\norm{\delta_t}_{\theta_t} = \norm{H_t^{1/2}\delta_t}$ by definition.
\end{proof}
We obtain the following slightly loose estimate of the Lipschitz constant of the objective function $\calL$ in problem \eqref{eq:reg-prob}.
\begin{lemma}\label{thm:L-Lipschitz}
	Let $g$ be constructed under the settings of \lemref{thm:g-properties} such that condition \ref{ass:sc} holds. Then, under the additional condition \ref{ass:r1}, the objective function $\calL$ is $(B_RB_\Phi + B_g)$-Lipschitz continuous, where $B_g$ is the local Lipschitz constant of $g$ in \lemref{thm:g-properties}\ref{thm:g-lip}.
\end{lemma}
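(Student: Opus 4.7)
The plan is to proceed by the triangle inequality applied to the split $\calL(\theta) = \hat{R}_s(\Phi(\cdot,\theta)) + g(\theta)$. For any $\bar\theta, \tilde\theta$ in the relevant neighborhood (say $\calB_{r_0}(\theta_0)$), I would write
\begin{align*}
|\calL(\bar\theta) - \calL(\tilde\theta)| \le |\hat{R}_s(\Phi(\cdot,\bar\theta)) - \hat{R}_s(\Phi(\cdot,\tilde\theta))| + |g(\bar\theta) - g(\tilde\theta)|,
\end{align*}
and bound each piece separately. The second piece is immediate from \lemref{thm:g-properties}\ref{thm:g-lip}, which yields the local Lipschitz constant $B_g$ of $g$; this is where condition \ref{ass:sc} (via the construction in \lemref{thm:g-properties}) is really used.

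For the first piece, the idea is to move the Lipschitz estimate through the composition. Using the fundamental theorem of calculus (or the mean value inequality for $\hat{R}_s$ on the line segment between $\Phi(\cdot,\bar\theta)$ and $\Phi(\cdot,\tilde\theta)$) combined with the uniform gradient bound $\|\gradn_\Phi \hat{R}_s(\Phi(\cdot,\theta))\| \le B_R$ from condition \ref{ass:r1}, I get
\begin{align*}
|\hat{R}_s(\Phi(\cdot,\bar\theta)) - \hat{R}_s(\Phi(\cdot,\tilde\theta))| \le B_R \|\Phi(\cdot,\bar\theta) - \Phi(\cdot,\tilde\theta)\|.
\end{align*}
Then the remark made in the statement of condition \ref{ass:r1} (that the Lipschitzness of $\varrho$ from \ref{ass:a}, together with the local boundedness of the network parameters near initialization, yields a constant $B_\Phi$ with $\|\Phi(\cdot,\bar\theta) - \Phi(\cdot,\tilde\theta)\| \le B_\Phi\|\bar\theta - \tilde\theta\|$) finishes this piece, giving the contribution $B_R B_\Phi \|\bar\theta - \tilde\theta\|$.

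Adding the two contributions gives $|\calL(\bar\theta) - \calL(\tilde\theta)| \le (B_R B_\Phi + B_g)\|\bar\theta - \tilde\theta\|$, which is exactly the claimed Lipschitz constant $\xi$. I do not expect any genuine obstacle here; the only subtlety worth flagging is to be careful about the domain on which each bound holds — $B_R$ is only guaranteed on $\calB_{r_0}(\theta_0)$ by \ref{ass:r1}, and $B_g$ is only a local constant — so the statement is really a \emph{local} Lipschitz bound for $\calL$ on $\calB_{r_0}(\theta_0)$, and the proof should make this domain explicit rather than claiming global Lipschitzness.
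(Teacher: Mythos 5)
Your argument is correct and is essentially the paper's proof: both split $\calL$ into the data-fit term $\hat{R}_s(\Phi(\cdot,\theta))$ and $g(\theta)$, bound the first piece by $B_R B_\Phi\norm{\bar{\theta}-\tilde{\theta}}$ using the gradient bound from \ref{ass:r1} together with the local Lipschitzness of $\Phi$, and bound the second piece by $B_g\norm{\bar{\theta}-\tilde{\theta}}$ via the local Lipschitz constant of $g$ from Lemma~\ref{thm:g-properties}, with the same caveat you flag that everything is local to a neighborhood of $\theta_0$. The only cosmetic difference is that the paper derives the per-piece estimates from the convexity of $\hat{R}_s$ and $g$ plus Cauchy--Schwarz (so the gradient bounds are needed only at the endpoints $\bar{\theta},\tilde{\theta}$), rather than through your mean-value/FTC argument along the segment of network outputs.
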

\begin{proof}
	By the convexity of $\hat{R}_s$ and $g$, we have
	\begin{align*}
		\hat{R}_s(\Phi(\cdot,\tilde{\theta})) \ge \hat{R}_s(\Phi(\cdot,\bar{\theta})) + \langle\gradn{\hat{R}_s(\Phi(\cdot,\tilde{\theta}))}, \Phi(\cdot,\tilde{\theta}) - \Phi(\cdot,\bar{\theta})\rangle,\quad g(\tilde{\theta}) \ge g(\bar{\theta}) + \langle\gradn{g(\tilde{\theta})}, \tilde{\theta} - \bar{\theta}\rangle,
	\end{align*}
	for some $\tilde{\theta},\bar{\theta}$ in the vicinity of $\theta_0$. Then, using $\calL \triangleq \hat{R}_s + g$ and the Cauchy-Schwarz inequality, we get
	\begin{align*}
		\abs{\calL(\bar{\theta}) - \calL(\tilde{\theta})}
		& \le \norm{\gradn_\Phi{\hat{R}_s(\Phi(\cdot,\bar{\theta}))}}\norm{\Phi(\cdot,\bar{\theta}) - \Phi(\cdot,\tilde{\theta})} + \norm{\gradn{g(\bar{\theta})}}\norm{\bar{\theta} - \tilde{\theta}}.
	\end{align*}
	By assumption \ref{ass:r1}, we have $\norm{\gradn_\Phi{\hat{R}_s(\Phi(\cdot,\bar{\theta}))}} \le B_R$. By \lemref{thm:g-properties}\ref{thm:g-lip}, $g$ is locally Lipschitz, and hence we have $\norm{\gradn{g(\bar{\theta})}} \le B_g$ for some $B_g$, according to \lemref{thm:grad-g-bound}. Then, using the local Lipschitz property of $\Phi$, we obtain
	\begin{align*}
		\abs{\calL(\bar{\theta}) - \calL(\tilde{\theta})} &\le B_RB_\Phi\norm{\bar{\theta} - \tilde{\theta}} + B_g\norm{\bar{\theta} - \tilde{\theta}}\\
		&= (B_RB_\Phi + B_g)\norm{\bar{\theta} - \tilde{\theta}}.
	\end{align*}
\end{proof}
The following result from \cite[Lemma 1]{wang1986trace} provides a useful inequality for the trace of the product of two symmetric matrices, one of which is positive semidefinite.
\begin{lemma}\label{thm:mat-prod}
	Let $P,Q\in\rr^{n\times n}$. If $P=P^\top\succeq 0$ and $Q$ is symmetric, then
	\begin{align*}
		\tr(P)\lambda_n(Q) \le \tr(PQ) \le \tr(P)\lambda_1(Q).
	\end{align*}
\end{lemma}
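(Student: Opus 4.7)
The plan is to diagonalize $Q$ via its spectral decomposition and reduce the trace expression $\tr(PQ)$ to a nonnegatively-weighted sum of the eigenvalues of $Q$, after which the bounds follow by pinching each eigenvalue between $\lambda_n(Q)$ and $\lambda_1(Q)$.

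First, I would apply the spectral theorem to the symmetric matrix $Q$ to write $Q = U\Lambda U^\top$, where $U\in\rr^{n\times n}$ is orthogonal and $\Lambda = \diag(\lambda_1(Q),\ldots,\lambda_n(Q))$. Using the cyclic invariance of the trace, I rewrite
\begin{align*}
\tr(PQ) = \tr(P U \Lambda U^\top) = \tr(U^\top P U \, \Lambda).
\end{align*}
Define $\tilde P \triangleq U^\top P U$. Two facts about $\tilde P$ are needed: (i) it is positive semidefinite, since for every $x\in\rr^n$, $x^\top \tilde P x = (Ux)^\top P (Ux) \ge 0$ by $P\succeq 0$; and (ii) its trace satisfies $\tr(\tilde P) = \tr(P)$ by orthogonal similarity.

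Next, because $\Lambda$ is diagonal, the product $\tilde P \Lambda$ has diagonal entries $\tilde P_{ii}\lambda_i(Q)$, so
\begin{align*}
\tr(PQ) \;=\; \sum_{i=1}^{n} \tilde P_{ii}\,\lambda_i(Q).
\end{align*}
The key observation is that $\tilde P_{ii} = e_i^\top \tilde P e_i \ge 0$ for each $i$, since $\tilde P \succeq 0$. The eigenvalue bound $\lambda_n(Q)\le \lambda_i(Q)\le \lambda_1(Q)$ combined with the nonnegativity of these weights yields
\begin{align*}
\lambda_n(Q)\sum_{i=1}^{n}\tilde P_{ii} \;\le\; \sum_{i=1}^{n}\tilde P_{ii}\lambda_i(Q) \;\le\; \lambda_1(Q)\sum_{i=1}^{n}\tilde P_{ii},
\end{align*}
and substituting $\sum_i \tilde P_{ii} = \tr(\tilde P) = \tr(P)$ produces exactly the claimed chain $\tr(P)\lambda_n(Q)\le \tr(PQ)\le \tr(P)\lambda_1(Q)$.

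There is no real obstacle here; this is a standard linear-algebra fact. The only verification that is not entirely mechanical is confirming that $\tilde P$ is PSD (done by the congruence argument above) so that its diagonal entries serve as legitimate nonnegative weights. Everything else is manipulation of traces via cyclicity and orthogonal invariance, which requires no further assumptions beyond $P=P^\top\succeq 0$ and $Q=Q^\top$ as stated.
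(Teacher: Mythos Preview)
Your proof is correct and entirely standard: diagonalize $Q$, transfer $P$ through the orthogonal change of basis, and use that the diagonal entries of a PSD matrix are nonnegative weights summing to $\tr(P)$. There is nothing to criticize.

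Note, however, that the paper does not actually prove this lemma; it simply cites it from \cite[Lemma~1]{wang1986trace}. So there is no ``paper's own proof'' to compare against --- your argument supplies what the paper omits, and it is the natural one.
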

The next result concerns the $2\times 2$ block partitioning of $\tG_t$, and characterizes the positive-definiteness of its leading principal blocks. For this, we require that the function $g$ is such that $H_t\succ 0$. This, indeed, is a property of many functions constructed from the $\ell_1$-norm in the sense of \defnref{def:smooth-function}. An example is the pseudo-Huber function or the function $\bar{g}$ considered in Section~\ref{ss:experiments}.
\begin{lemma}\label{thm:G-positive}
	Consider a $2\times 2$ block partitioning of $\tG_t$, and let $\tG_{11}\in\rr^{m\times m}$, $\tG_{22}\in\rr^{1\times 1}$ respectively denote the upper left and lower right blocks. If $H_t\succ 0$, then it holds that $\tG_{22} \in \rp$ and $\tG_{11} \succ 0$.
\end{lemma}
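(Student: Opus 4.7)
The plan is to first simplify $\tG_t$ via the Woodbury-type identity already used in deriving \eqref{eq:ggn-step-approx}, namely $H_t^{-1}\Jaug_t^\top(I+\Qaug_t\Jaug_tH_t^{-1}\Jaug_t^\top)^{-1} = (H_t+\Jaug_t^\top\Qaug_t\Jaug_t)^{-1}\Jaug_t^\top$. Since $\Qaug_t=\diag(Q_t,0)$, we have $\Jaug_t^\top\Qaug_t\Jaug_t = J_t^\top Q_tJ_t$, and setting $M_t\triangleq H_t+J_t^\top Q_tJ_t$ gives the compact representation $\tG_t = \tilde{J}_tM_t^{-1}\Jaug_t^\top$. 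Under $H_t\succ 0$ and $Q_t\succeq 0$ (the convex-loss assumption backing the GGN approximation), $M_t\succ 0$, and hence $M_t^{-1}\succ 0$.

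Using that $\tilde{J}_t$ has $J_t$ as its first $m$ rows and $\tilde{\phi}_te_p^\top$ as its last row, and that $\Jaug_t^\top = [J_t^\top\ \nabla g(\theta_t)]$, the two diagonal blocks of $\tG_t$ read
\[ \tG_{11} = J_tM_t^{-1}J_t^\top, \qquad \tG_{22} = \tilde{\phi}_t\bigl(e_p^\top M_t^{-1}\nabla g(\theta_t)\bigr). \]

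For $\tG_{11}\succ 0$, I would invoke the overparameterized regime $p\ge m+1$ together with the standing property $\smin{\Jaug_t}>0$: these force $\Jaug_t\in\rr^{(m+1)\times p}$ to have full row rank $m+1$, so its submatrix $J_t$ has row rank $m$. Writing $\tG_{11}=(J_tM_t^{-1/2})(J_tM_t^{-1/2})^\top$ with $M_t^{-1/2}$ the symmetric positive square root of $M_t^{-1}$, the full row rank of $J_t$ yields triviality of the left kernel of $J_tM_t^{-1/2}$, whence $\tG_{11}\succ 0$.

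The harder part is $\tG_{22}>0$: the scalar $e_p^\top M_t^{-1}\nabla g(\theta_t)$ is not sign-determined by positive definiteness of $M_t$ alone, since off-diagonal entries of $M_t^{-1}$ can be of either sign. Substituting $\tilde{\phi}_t=1/\phi_{t-1}^{m+1}$ with $\phi_{t-1}^{m+1}=e_p^\top M_{t-1}^{-1}(J_{t-1}^\top e_{t-1}+\nabla g(\theta_{t-1}))$ recasts $\tG_{22}$ as a ratio of two such inner products at consecutive iterates. My plan is to match the signs of numerator and denominator by exploiting the self-concordance of $g$ (\ref{ass:sc}) to bound the variation of $M_t$ over one iteration, combined with the step bound of Lemma~\ref{thm:step-dist} and condition \ref{ass:r2}, to ensure neither factor vanishes in the ellipsoid $\calE_r(\theta_0)$ near initialization. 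I expect this sign-matching step, rather than the block computation itself, to be the main obstacle.
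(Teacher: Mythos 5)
Your reduction of $\tG_t$ via the push-through identity to $\tilde{J}_t M_t^{-1}\Jaug_t^\top$ with $M_t = H_t + J_t^\top Q_t J_t \succ 0$, and your identification $\tG_{11} = J_t M_t^{-1} J_t^\top$ with positive definiteness deduced from $M_t^{-1}\succ 0$ plus full row rank of $J_t$, coincide with the paper's own argument (the paper works with $B_t \triangleq J_t^\top J_t + H_t$ after noting $Q_t = I$ for the squared loss); your rank justification for $\tG_{11}\succ 0$ is in fact more careful than the paper's, which only remarks that $J_t$ is not identically zero.

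The genuine gap is the claim $\tG_{22}\in\rp$: you do not prove it, and you say so yourself. You correctly compute $\tG_{22} = \tilde{\phi}_t\, e_p^\top M_t^{-1}\nabla g(\theta_t)$ and correctly observe that $M_t \succ 0$ alone does not fix the sign of this quantity, but what follows is only a programme: matching the sign of $e_p^\top M_t^{-1}\nabla g(\theta_t)$ with that of $e_p^\top M_{t-1}^{-1}\bigl(J_{t-1}^\top e_{t-1} + \nabla g(\theta_{t-1})\bigr)$ by bounding the one-step variation of $M_t$ through the self-concordance of $g$, Lemma~\ref{thm:step-dist} and condition \ref{ass:r2}. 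These ingredients control the \emph{size} of the step and of the relevant matrices and vectors, not the \emph{sign} of either inner product, and neither inner product is bounded away from zero by the stated assumptions; so the "sign-matching" step you flag as the main obstacle is precisely the content of the claim, and it is left unestablished. The paper closes this part without any inter-iteration analysis: since $B_t\succ 0$, Sylvester's criterion (equivalently, the cofactor formula) gives positivity of the last diagonal entry of $B_t^{-1}$, and the positivity of $\tG_{22}$ is then read off from the way the augmentation $\tilde{\phi}_t$ in the last row of $\tilde{J}_t$ is constructed, rather than from a relation between consecutive iterates. As it stands, your proposal establishes only the $\tG_{11}$ half of the lemma.
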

\begin{proof}
	By the definition of $\tG_t$ and using \eqref{eq:ggn-reg}, we have $\tG_t = \tilde{J}_t(\Jaug_t^\top \Qaug_t \Jaug_t + H_t)^{-1}\Jaug_t^\top$. We note that for the squared loss that we consider, $Q_t$ is the identity matrix and that we can write $\tG_t = \tilde{J}_t(J_t^\top Q_t J_t + H_t)^{-1}\Jaug_t^\top=\tilde{J}_t(J_t^\top J_t + H_t)^{-1}\Jaug_t^\top$. Notice the removal of the augmentations, as the last diagonal entry of $\Qaug_t$ is zero. We have $\langle\hat{v},J_t^\top J_t\hat{v}\rangle = \|J\hat{v}\|^2 \ge 0$ for all non-zero $\hat{v}\in\rr^n$, and hence $B_t \triangleq J_t^\top J_t + H_t \succ 0$. Next, observe that $\tG_{11}$ results from removing the augmentations on $\tilde{J}$ and $\Qaug_t$ in $\tG_t$, that is, $\tG_{11} \equiv J_tB_t^{-1}J_t^\top$. Let $\hat{u} \triangleq B_t\hat{v}$; we have $\langle \hat{u},B_t^{-1}\hat{u}\rangle = \langle B_t\hat{v},B_t^{-1}B_t\hat{v}\rangle = \langle \hat{v},B_t^\top \hat{v}\rangle \succ 0$. Then, in a similar way, if $J_t$ does not have all its entries equal to zero, we get that $\tG_{11} \succ 0$.
	
	To show $\tG_{22} \in \rp$, we note that since $B_t\succ 0$, it has a non-zero determinant, and hence by Sylvester's criterion, we have $\frac{1}{\det(B_t)}((-1)^{2n}M_{n,n}) > 0$, where $\det(B_t)$ denotes the determinant of $B_t$ and $M_{n,n}$ denotes the $(n,n)$-th minor of $B_t$. Then, $\tG_{22} \in \rp$ follows from the definition of $\tilde{J}$.
\end{proof}
We are now ready to prove our main result.
\paragraph{Proof of \thmref{thm:main-result}.}
\begin{proof}
	Consider the time evolution of the regularized NN given by \eqref{eq:ggn-evolution}. Using the augmentation specified by \eqref{eq:ggn-evolution-aug}, we have
	\begin{align}
		\norm{\tPhi_{t+1} - \tPhi^*}^2 &= \norm{\tPhi_t - \alpha_t \tG_t \eaug_t - \tPhi^*}^2\nonumber\\
		&= \norm{\tPhi_t - \tPhi^*}^2 - 2\alpha_t\langle \tPhi_t - \tPhi^*,\tG_t(\tPhi_t - \tPhi^*)\rangle + \alpha_t^2\norm{\tG_t(\tPhi_t - \tPhi^*)}^2. \label{eq:norm-est0}
	\end{align}
	Let us partition $\tPhi_t - \tPhi^*$ and $\tG_t$ as follows (omitting dependence on $t$ in the blocks for brevity):
	\begin{align}
		\tPhi_t - \tPhi^* \equiv \left[
		\begin{array}{c}
			\tPhi_1 \\ \hdashline[2pt/2pt]
			\tPhi_2
		\end{array}
		\right], \quad \tG_t \equiv \left[
		\begin{array}{c;{2pt/2pt}c}
			\tG_{11} & \tG_{12} \\ \hdashline[2pt/2pt]
			\tG_{21} & \tG_{22}
		\end{array}
		\right],\label{eq:partition}
	\end{align}
	where $\tPhi_1 = \Phi_t - \Phi^* \in \rr^m, \tPhi_2 = 1$ and hence $\tG_{11} \in \rr^{m\times m}$. Then, we have
	\begin{align}
		\langle \tPhi_t - \tPhi^*,\tG_t(\tPhi_t - \tPhi^*)\rangle &= \langle \tPhi_1, \tG_{11}\tPhi_1\rangle + \langle \tPhi_2, \tG_{21}\tPhi_1\rangle + \langle \tPhi_1, \tG_{12}\tPhi_2\rangle + \langle \tPhi_2, \tG_{22}\tPhi_2\rangle\nonumber\\
		&= \langle \tPhi_1, \tG_{11}\tPhi_1\rangle + \langle\tG_{21} + \tG_{12}^\top, \tPhi_1\rangle + \tG_{22}, \label{eq:norm-est1}
	\end{align}
	where we have used $\tPhi_2 = 1$. Recall that by \lemref{thm:G-positive}, we get $\tG_{22} \in \rp$ and $\tG_{11} \succ 0$.
	
	Using the block partitioning of $\tG_t$ in \eqref{eq:partition}, the product $\tG_t^\top\tG_t$ gives the following block structure
	\begin{align*}
		\tG_t^\top\tG_t = \left[
		\begin{array}{c;{2pt/2pt}c}
			(\tG_t^\top\tG_t)_{11} & (\tG_t^\top\tG_t)_{12} \\ \hdashline[2pt/2pt]
			(\tG_t^\top\tG_t)_{21} & (\tG_t^\top\tG_t)_{22}
		\end{array}
		\right] \triangleq \left[
		\begin{array}{c;{2pt/2pt}c}
			\tG_{11}^\top\tG_{11} + \tG_{21}^\top\tG_{21} & \tG_{11}^\top\tG_{12} + \tG_{21}^\top\tG_{22} \\ \hdashline[2pt/2pt]
			\tG_{12}^\top\tG_{11} + \tG_{22}^\top\tG_{21} & \tG_{12}^\top\tG_{12} + \tG_{22}^\top\tG_{22}
		\end{array}
		\right].
	\end{align*}
	Consider the congruence
	\begin{align*}
		&\left[
		\begin{array}{c;{2pt/2pt}c}
			(\tG_t^\top\tG_t)_{11} & (\tG_t^\top\tG_t)_{12} \\ \hdashline[2pt/2pt]
			(\tG_t^\top\tG_t)_{21} & (\tG_t^\top\tG_t)_{22}
		\end{array}
		\right] \sim \\
		&\left[
		\begin{array}{c;{2pt/2pt}c}
			(\tG_t^\top\tG_t)_{11}^{-1/2} & 0 \\ \hdashline[2pt/2pt]
			0 & (\tG_t^\top\tG_t)_{22}^{-1/2}
		\end{array}
		\right] \left[
		\begin{array}{c;{2pt/2pt}c}
			(\tG_t^\top\tG_t)_{11} & (\tG_t^\top\tG_t)_{12} \\ \hdashline[2pt/2pt]
			(\tG_t^\top\tG_t)_{21} & (\tG_t^\top\tG_t)_{22}
		\end{array}
		\right] \left[
		\begin{array}{c;{2pt/2pt}c}
			(\tG_t^\top\tG_t)_{11}^{-1/2} & 0 \\ \hdashline[2pt/2pt]
			0 & (\tG_t^\top\tG_t)_{22}^{-1/2}
		\end{array}
		\right] \\
		&= \left[
		\begin{array}{c;{2pt/2pt}c}
			I & (\tG_t^\top\tG_t)_{11}^{-1/2}(\tG_t^\top\tG_t)_{12}(\tG_t^\top\tG_t)_{22}^{-1/2} \\ \hdashline[2pt/2pt]
			(\tG_t^\top\tG_t)_{22}^{-1/2}(\tG_t^\top\tG_t)_{21}(\tG_t^\top\tG_t)_{11}^{-1/2} & I
		\end{array}
		\right].
	\end{align*}
	Using this relation, one can show that $\tG_t^\top\tG_t\succ 0$; since $(\tG_t^\top\tG_t)_{21} = (\tG_t^\top\tG_t)_{12}^\top$, we only require that  $\|(\tG_t^\top\tG_t)_{11}^{-1/2}(\tG_t^\top\tG_t)_{12}(\tG_t^\top\tG_t)_{22}^{-1/2}\| \le 1$. We assert that this holds with a high probability by our assumptions, for example, by overparameterization and the condition that $|\tG_{22}|\ge |\langle\tG_{21} + \tG_{12},\tilde{v}\rangle|$ for any $\tilde{v}\in\rr^{m+1}$. As a result, we invoke \lemref{thm:mat-prod} and obtain
	\begin{align}
		\norm{\tG_t(\tPhi_t - \tPhi^*)}^2 &= \tr(\tG_t^\top\tG_t(\tPhi_t - \tPhi^*)(\tPhi_t - \tPhi^*)^\top)\nonumber\\
		&\le \tr(\tG_t^\top\tG_t)\lambda_1((\tPhi_t - \tPhi^*)(\tPhi_t - \tPhi^*)^\top)\nonumber\\
		&= \tr(\tG_t^\top\tG_t)\norm{\tPhi_t - \tPhi^*}^2. \label{eq:norm-est2}
	\end{align}
	Using \eqref{eq:norm-est1} and \eqref{eq:norm-est2} in \eqref{eq:norm-est0}, we have
	\begin{align*}
		\norm{\tPhi_{t+1} - \tPhi^*}^2 &\le \norm{\tPhi_t - \tPhi^*}^2 - 2\alpha_t\norm{\tG_{11}^{1/2}\tPhi_1}^2 -2\alpha_t\langle \tG_{21}^\top+\tG_{12},\tPhi_1\rangle - 2\alpha_t\tG_{22}\nonumber \\&\qquad + \alpha_t^2\tr(\tG_t^\top\tG_t)\norm{\tPhi_t - \tPhi^*}^2.
	\end{align*}
	Now, using the inequality $-\abs{\langle \tG_{21}^\top+\tG_{12},\tPhi_1\rangle} \le \langle \tG_{21}^\top+\tG_{12},\tPhi_1\rangle \le \abs{\langle \tG_{21}^\top+\tG_{12},\tPhi_1\rangle}$, we get that
	\begin{align*}
		-\langle \tG_{21}^\top+\tG_{12},\tPhi_1\rangle \le \abs{\langle \tG_{21}^\top+\tG_{12},\tPhi_1\rangle},
	\end{align*}
	and then,
	\begin{align*}
		\norm{\tPhi_{t+1} - \tPhi^*}^2 &\le \norm{\tPhi_t - \tPhi^*}^2 - 2\alpha_t\norm{\tG_{11}^{1/2}\tPhi_1}^2 + 2\alpha_t\abs{\langle \tG_{21}^\top+\tG_{12},\tPhi_1\rangle} - 2\alpha_t\tG_{22}\nonumber \\&\qquad + \alpha_t^2\tr(\tG_t^\top\tG_t)\norm{\tPhi_t - \tPhi^*}^2.
	\end{align*}
	Setting $\tilde{v} = \tPhi_1$ in the condition $|\tG_{22}|\ge |\langle\tG_{21} + \tG_{12}^\top,\tilde{v}\rangle|$, it holds that $|\tG_{22}| > |\langle\tG_{21} + \tG_{12}^\top,\tPhi_1\rangle| - C_1$ for any arbitrary constant $C_1>0$. Set $C_1 = \frac{1}{\alpha_tC_2}$ for some constant $C_2>0$, noting that $\alpha_t > 0$ for all $t$, then we get
	\begin{align*}
		-\tG_{22} \le -\abs{\langle\tG_{21} + \tG_{12}^\top,\tPhi_1\rangle} + \frac{1}{\alpha_tC_2}.
	\end{align*}
	Consequently,
	\begin{align*}
		\norm{\tPhi_{t+1} - \tPhi^*}^2 &\le \norm{\tPhi_t - \tPhi^*}^2 - 2\alpha_t\norm{\tG_{11}^{1/2}\tPhi_1}^2 + \frac{2}{C_2} + \alpha_t^2\tr(\tG_t^\top\tG_t)\norm{\tPhi_t - \tPhi^*}^2.
	\end{align*}
	Now, if the condition $1+M_g\eta_t \le \|\tG_t\|_F$ is such that
	\begin{align*}
		\frac{\norm{\tG_{11}^{1/2}\tPhi_1}^2}{\tr(\tG_t^\top\tG_t)\norm{\tPhi_t - \tPhi^*}^2} \ge \alpha_t \triangleq \frac{\bar{\alpha}_t}{1+M_g\eta_t} \ge \frac{\bar{\alpha}_t}{\norm{\tG_t}_F} \equiv \frac{\bar{\alpha}_t}{\sqrt{\tr(\tG_t^\top\tG_t)}}, 
	\end{align*}
	by fixing $0<\bar{\alpha}_t\equiv\bar{\alpha} < 1$, then
	\begin{align}
		\norm{\tPhi_{t+1} - \tPhi^*}^2 &\le \left(1-\bar{\alpha}\right)\norm{\tPhi_t - \tPhi^*}^2 + \frac{2}{C_2}.\label{eq:recurrence1}
	\end{align}
	The recurrence in \eqref{eq:recurrence1} can be expanded as follows:
	\begin{align*}
		\norm{\tPhi_{t+1} - \tPhi^*}^2 &\le \left(1-\bar{\alpha}\right)\norm{\tPhi_t - \tPhi^*}^2 + \frac{2}{C_2}\\
		&\le \left(1-\bar{\alpha}\right)\left(\left(1-\bar{\alpha}\right)\norm{\tPhi_{t-1} - \tPhi^*}^2 + \frac{2}{C_2}\right) + \frac{2}{C_2}\\
		&\le \left(1-\bar{\alpha}\right)\left(\left(1-\bar{\alpha}\right)\left(\left(1-\bar{\alpha}\right)\norm{\tPhi_{t-2} - \tPhi^*}^2 + \frac{2}{C_2}\right) + \frac{2}{C_2}\right) + \frac{2}{C_2}\\
		&=\left(1-\bar{\alpha}\right)^3\norm{\tPhi_{t-2} - \tPhi^*}^2 + \left(1-\bar{\alpha}\right)^2\frac{2}{C_2} + \left(1-\bar{\alpha}\right)\frac{2}{C_2} + \frac{2}{C_2},
	\end{align*}
	and so on. This gives, for any $T\ge 1$,
	\begin{align}
		\norm{\tPhi_T - \tPhi^*}^2 \le (1-\bar{\alpha})^T\norm{\tPhi_0 - \tPhi^*}^2 + \frac{2}{C_2}\sum_{j=0}^{T-1}(1-\bar{\alpha})^{T-j-1}.\label{eq:recurrence3}
	\end{align}
	Since $C_2>0$ is arbitrary, we set $C_2 = 2\sum_{j=0}^{T-1}(1-\bar{\alpha})^{T-j-1}$. We also have that since $\bar{\alpha} > 0$, it satisfies the inequality $1 - \bar{\alpha} \le \exp(-\bar{\alpha})$. Then \eqref{eq:recurrence3} gives
	\begin{align}
		\norm{\tPhi_T - \tPhi^*}^2 \le \exp(-\bar{\alpha}T)\norm{\tPhi_0 - \tPhi^*}^2 + 1.\label{eq:recurrence2}
	\end{align}
	Substituting our choice of $T$ into \eqref{eq:recurrence2} gives
	\begin{align*}
		\norm{\tPhi_T - \tPhi^*}^2 \le \epsilon + 1,
	\end{align*}
	which is result \ref{p:result1}.
	
	To prove \ref{p:result2}, we first notice that the local condition $\norm{\hessn_\Phi{\hat{R}_s}(\Phi)}_{op}\le D_R$ in \ref{ass:r1} implies local $D_R$-Lipschitz continuity of $\gradn_\Phi{\hat{R}_s(\Phi)}$ with respect to $\Phi$, that is, for $\bar{\theta},\tilde{\theta}$ around the initialization, we have
	\begin{align*}
		\norm{\gradn{\hat{R}_s(\Phi(\cdot,\bar{\theta}))} - \gradn{\hat{R}_s(\Phi(\cdot,\tilde{\theta}))}} \le D_R\norm{\Phi(\cdot,\bar{\theta}) - \Phi(\cdot,\tilde{\theta})},
	\end{align*}
	or equivalently,
	\begin{align}
		\hat{R}_s(\Phi(\cdot,\bar{\theta})) \le \hat{R}_s(\Phi(\cdot,\tilde{\theta})) + \langle \gradn{\hat{R}_s(\Phi(\cdot,\tilde{\theta}))}, \Phi(\cdot,\bar{\theta}) - \Phi(\cdot,\tilde{\theta})\rangle + \frac{D_R}{2}\norm{\Phi(\cdot,\bar{\theta}) - \Phi(\cdot,\tilde{\theta})}^2.\label{eq:R-Lipschitz}
	\end{align}
	We recall the notation $\Phi_t \triangleq \Phi(\cdot,\theta_t)$ for all $t\in\rnn$. Then, using $\calL \triangleq \hat{R}_s + g$, \lemref{thm:g-bound0}, and \eqref{eq:R-Lipschitz}, we get
	\begin{align*}
		\calL(\theta_{t+1}) &\le \calL(\theta_t) + \langle \gradn{\hat{R}_s(\Phi_t)}, \Phi_{t+1} - \Phi_t\rangle + \frac{D_R}{2}\norm{\Phi_{t+1} - \Phi_t}^2 + \langle \gradn{g(\theta_t)}, \theta_{t+1} - \theta_t\rangle\\ &\quad + \omega_\nu(d_\nu(\theta_t, \theta_{t+1}))\norm{\theta_{t+1} - \theta_t}_{\theta_t}^2\\
		&\le \calL(\theta_t) - \frac{\gamma_R}{2}\norm{\Phi_{t+1} - \Phi_t}^2 - \calL(\theta_t) + \calL(\theta_{t+1})  - \omega_\nu(-d_\nu(\theta_t, \theta_{t+1}))\norm{\theta_{t+1} - \theta_t}_{\theta_t}^2\\ &\quad + \frac{D_R}{2}\norm{\Phi_{t+1} - \Phi_t}^2 + \omega_\nu(d_\nu(\theta_t, \theta_{t+1}))\norm{\theta_{t+1} - \theta_t}_{\theta_t}^2.
	\end{align*}
	Using the $\gamma_R$-strong convexity assumption on $\hat{R}$ in \ref{ass:r1} and the Lipschitz property of $\calL$ in \lemref{thm:L-Lipschitz}, this gives
	\begin{align}
		\calL(\theta_{t+1})	&\le \calL(\theta_t) + (B_RB_\Phi + B_g)\norm{\theta_{t+1} - \theta_t} + \frac{D_R-\gamma_R}{2}\norm{\Phi_{t+1} - \Phi_t}^2\nonumber\\ &\quad + \left(\omega_\nu(d_\nu(\theta_t, \theta_{t+1})) - \omega_\nu(-d_\nu(\theta_t, \theta_{t+1}))\right)\norm{\theta_{t+1} - \theta_t}_{\theta_t}^2\nonumber\\
		&\le \calL(\theta_t) + (B_RB_\Phi + B_g)\norm{\theta_{t+1} - \theta_t} + \frac{B_\Phi^2(D_R-\gamma_R)}{2}\norm{\theta_{t+1} - \theta_t}^2\nonumber\\ &\quad + \left(\omega_\nu(d_\nu(\theta_t, \theta_{t+1})) - \omega_\nu(-d_\nu(\theta_t, \theta_{t+1}))\right)\norm{\theta_{t+1} - \theta_t}_{\theta_t}^2. \label{eq:l-est0}
	\end{align}
	Recalling the notation $\delta_t \triangleq \theta_{t+1} - \theta_t$ and substituting the estimates on $\norm{\delta_t}$ and $\norm{\delta_t}_{\theta_t}$ from \lemref{thm:step-dist} into \eqref{eq:l-est0} yields result \ref{p:result2}.
\end{proof}
\section{Additional experimental details and results}\label{app:add-mnist}
\subsection{Remark on the T-I measure}\label{app:ti-measure}
The time-invariance measure provides a way to measure stability of the optimizer's dynamics from initialization. However, since the signum function does not account for indices $(i,j)$ of $a_l$ with $a_{ij} = 0$, \ie,
\begin{align*}
	\sign(a_{ij}) \triangleq
	\begin{cases}
		+1 & \text{if }a_{ij} > 0,\\
		-1 & \text{if }a_{ij} < 0,
	\end{cases}
\end{align*}
and, as we have seen, the GGN-SCORE framework potentially produces many of this instance (with $a_{ij} = 0$) to reduce the model's complexity and/or improve generalization, a natural question is what state should be assumed for neuron $a_{ij}$ when it is exactly zero. For this, we follow the standard convention that if $a_{ij} = 0$, then the $(i,j)$-th neuron remains unchanged from its initial state \cite[Section 13.7]{haykin2009neural}. Under this convention, the proportion of the indices $(i,j)$ of $a_l^{\text{final}}$ satisfying $\sign(a_{ij}^{\text{start}}) \neq \sign(a_{ij}^{\text{final}})$ with $a_{ij}^{\text{final}} = 0$ contribute to the stability of activations, and hence should be accounted for in the T-I measure. However, this contribution appear to be insignificant for the values of $\tau$ and $\mu$ that give the best test accuracies. From what we observe in \figref{fig:testloss-vs-tau-mu-mnist} and \figref{fig:accuracy-mnist}, proper choices of $\mu$ and $\tau$ reliably produces stable dynamics of the optimizer as well as a good generalization of the final trained model.
\subsection{MNIST teacher-student setting}
In order to evaluate GGN-SCORE on the MNIST dataset such that we are close to the theoretical framework, we consider a teacher-student setup for the MNIST dataset in a similar way as \cite[Appendix C.4]{arbel2023rethinking}:
\begin{figure}[h!]
	\centering
	\includegraphics[width=0.8\textwidth]{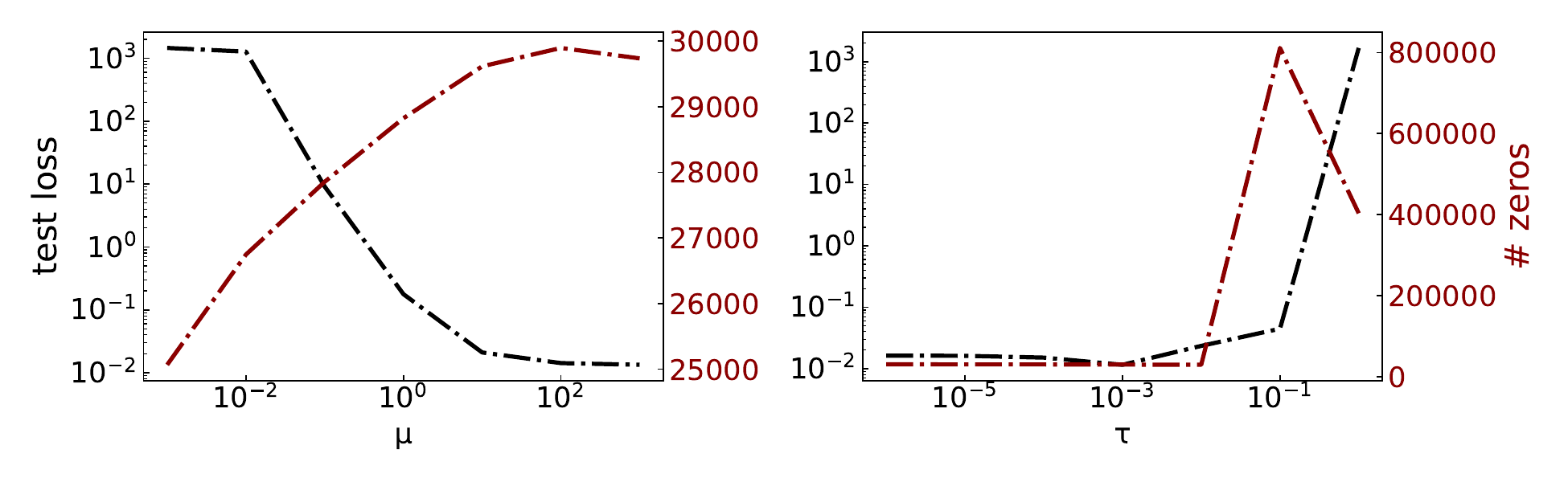}
	\caption{Test loss of the GGN-SCORE-trained NN on MNIST dataset (teacher-student) with $g(\theta)$ given by \eqref{eq:g-example}. \textbf{Left:} Results for different values of the regularization smoothing parameter $\mu$ with $\tau=10^{-4}$ fixed. \textbf{Right:} Results for different values of the regularization strength $\tau$ with $\mu=1/\sqrt{n}$ fixed.}
	\label{fig:testloss-vs-tau-mu-mnist-t-s}
\end{figure}
\begin{figure*}[h!]
	\centering
	\begin{subfigure}[b]{0.3\textwidth}
		\centering
		\includegraphics[width=\textwidth]{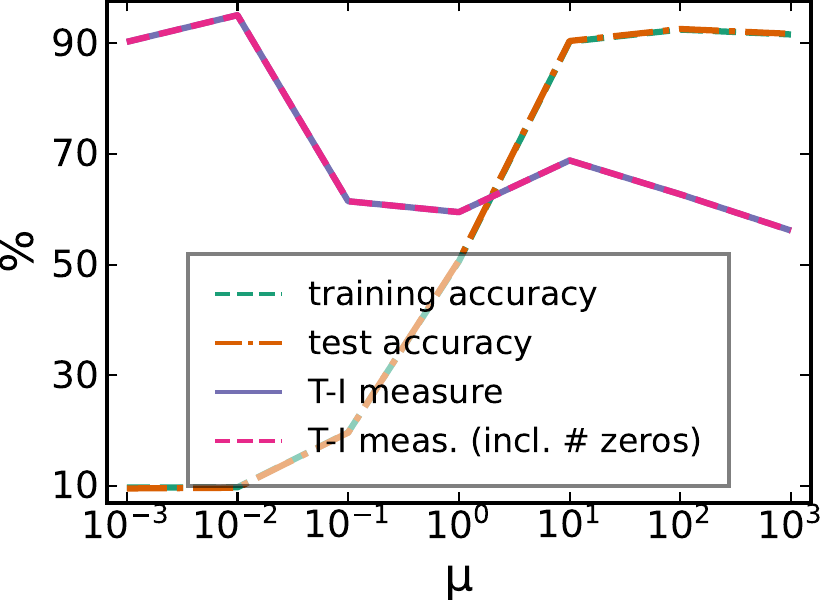}
	\end{subfigure}
	\hfil
	\begin{subfigure}[b]{0.3\textwidth}
		\centering
		\includegraphics[width=\textwidth]{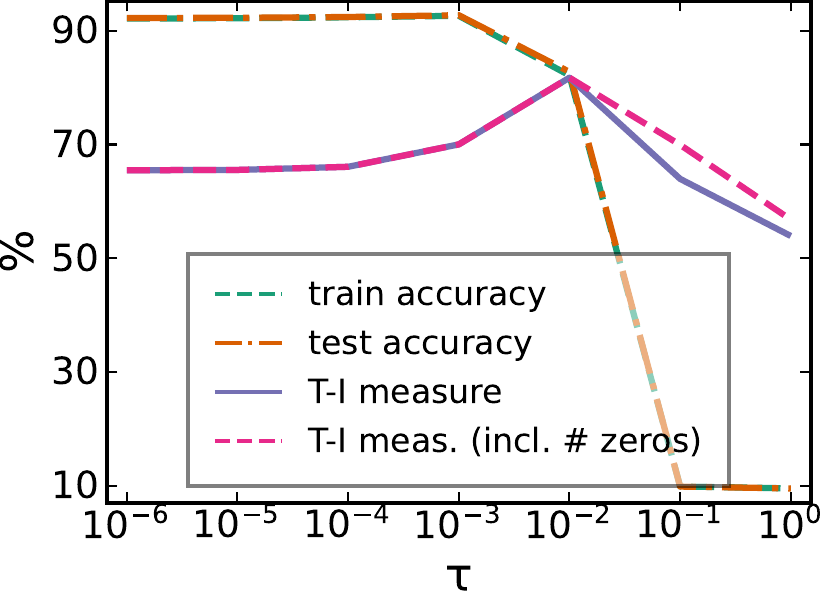}
	\end{subfigure}
	\caption{Accuracy and T-I measure of the GGN-SCORE-trained NN on MNIST dataset (teacher-student) for different values of $\mu$ (\textbf{left}) and different values of $\tau$ (\textbf{right}), with the regularization function $g(\theta)$ given by \eqref{eq:g-example}. In the left figure, $\tau=10^{-4}$ is used. In the right figure, $\mu=1/\sqrt{n}$ is used.}
	\label{fig:accuracy-mnist-t-s}
\end{figure*}
\begin{itemize}
	\item We create a custom training dataset by combining the original MNIST test dataset (containing $10000$ sample points) and a balanced subset of the original training dataset. This balanced subset is created by ``undersampling" the first $3000$ samples of the original training dataset to give $2610$ sample points. In total, the custom training dataset contains $12610$ sample points.
	\item We then train a teacher NN $\Phi^*$ of the form \eqref{eq:teacher-nn} and hidden size $n^*=16$ on this training dataset with the cross-entropy loss function and the SiLU activation function.
	\item A training ``target" dataset is created from $\Phi^*$ (with the softmax function applied on each output of $\Phi^*$).
	\item The student NN of the form \eqref{eq:nn} with the SiLU activation and hidden size $n=1024$ is then trained on the custom training input samples and their corresponding target samples constructed from $\Phi^*$. The trained student NN is tested on the original MNIST test dataset.
\end{itemize}

The training and test results are displayed \figref{fig:testloss-vs-tau-mu-mnist-t-s} and \figref{fig:accuracy-mnist-t-s}. We follow a similar evaluation procedure as in Section~\ref{sec:mnist}, \ie, the results are evaluated on the basis of the test loss, training and test accuracy, and T-I measure of the trained student NN. Interestingly, similar observations as in Section~\ref{sec:mnist} are made from the displayed results. The total computation time to generate the results in \figref{fig:testloss-vs-tau-mu-mnist-t-s} and \figref{fig:accuracy-mnist-t-s} is $\sim8$ hours, $39$ minutes on CPU.

\subsection{FashionMNIST experiments}
We perform experiments on the FashionMNIST dataset \cite{fashionmnist} under the same setting as the MNIST experiments in Section~\ref{sec:mnist}. While the FashionMNIST classification tends to be a harder task than the MNIST, results shown in \figref{fig:testloss-vs-tau-mu-fashionmnist} and \figref{fig:accuracy-fashionmnist} indicate similar behaviours as those described in Section~\ref{sec:mnist} regarding the influence of the regularization parameters.
\subsection{Comparison with GD}
We now compare GGN-SCORE with GD on three UCI benchmark datasets\footnote{\url{https://archive.ics.uci.edu}.}: pendigits, letter, and avila, summarized in Table~\ref{tab:datasets}. As in Section~\ref{ss:experiments}, we use a learning rate of $1$ for GD, and set the hidden size $n=128$ in all the experiments for a NN of the form \eqref{eq:nn}, and a scaling $\kappa(n)=1/\sqrt{n}$. The function $g$ in GGN-SCORE is given by \eqref{eq:g-example} with $\tau=10^{-4}$. The results are shown \figref{fig:results-uci} and Table~\ref{tab:stability}. We observe faster convergence and better generalization in most cases for GGN-SCORE, and as in the case for the full-batch deterministic setting in Section~\ref{ss:experiments} on synthetic datasets, we achieve this performance in faster time compared to GD. Note that much of the computational burden associated with the regularized GGN is greatly reduced by using the stylized expression \eqref{eq:ggn-step-approx}, since the mini-batch size is typically much smaller than $p$, the size of the optimization variable $\theta$.
\begin{figure}[h!]
	\centering
	\includegraphics[width=0.8\textwidth]{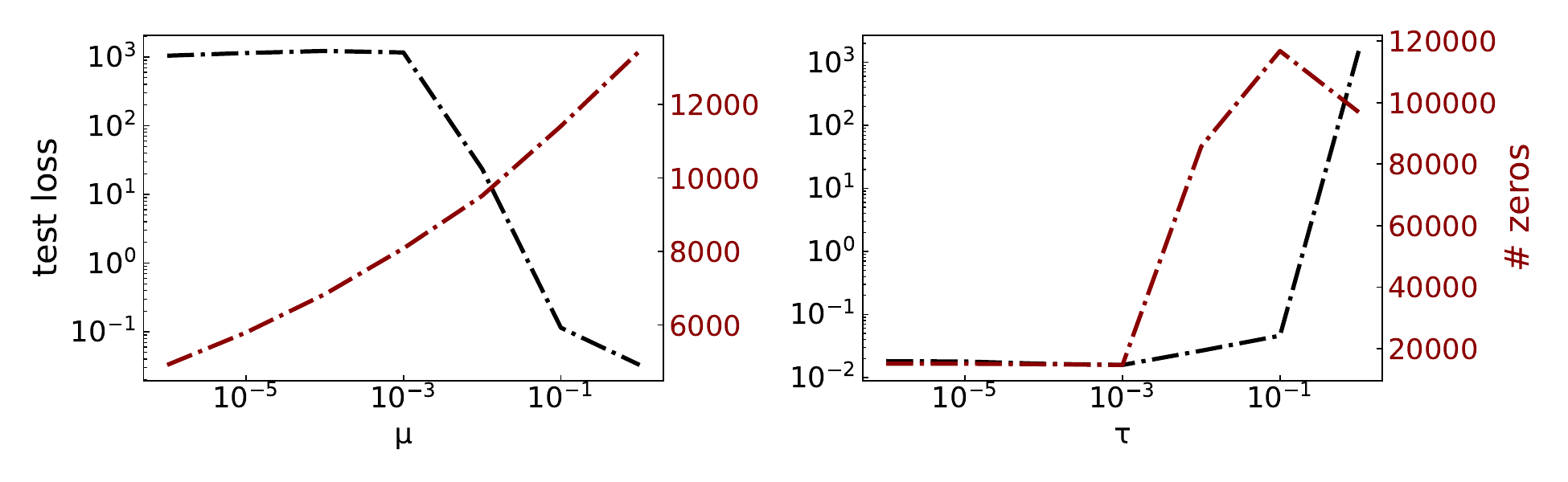}
	\caption{Test loss evaluation of the GGN-SCORE-trained NN on FashionMNIST dataset for different values of the regularization smoothing parameter $\mu$ fixing $\tau=10^{-4}$ (\textbf{left}) and different values of the regularization strength $\tau$ fixing $\mu=1/\sqrt{n}$ (\textbf{right}), where the regularization function $g(\theta)$ is given by \eqref{eq:g-example}.}
	\label{fig:testloss-vs-tau-mu-fashionmnist}
\end{figure}
\begin{figure*}[t!]
	\centering
	\begin{subfigure}[b]{0.3\textwidth}
		\centering
		\includegraphics[width=\textwidth]{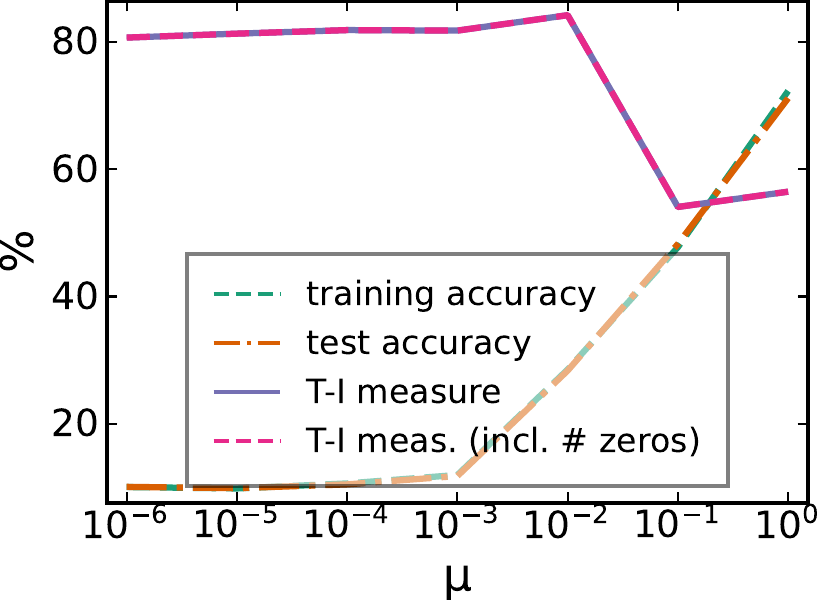}
	\end{subfigure}
	\hfil
	\begin{subfigure}[b]{0.3\textwidth}
		\centering
		\includegraphics[width=\textwidth]{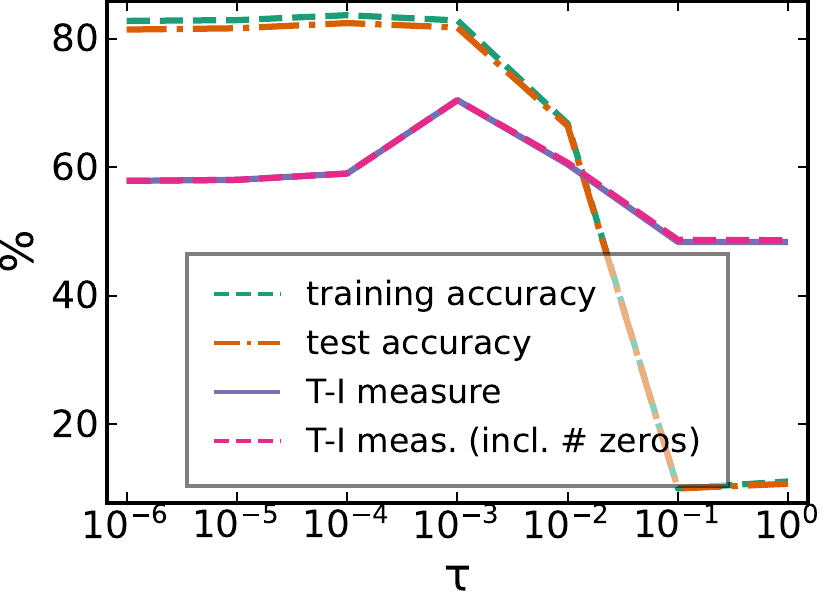}
	\end{subfigure}
	\caption{Evaluation of GGN-SCORE on FashionMNIST dataset for different values of the regularization strength $\mu$ (\textbf{left}) and different values of the regularization smoothing parameter $\tau$ (\textbf{right}). The regularization function $g(\theta)$ is given by \eqref{eq:g-example}. In the left figure, $\mu=1/\sqrt{n}$ is used. In the right figure, $\tau=10^{-4}$ is used.}
	\label{fig:accuracy-fashionmnist}
\end{figure*}
\begin{figure*}[h!]
	\centering
	\begin{subfigure}[b]{\textwidth}
		\centering
		\includegraphics[width=0.9\textwidth]{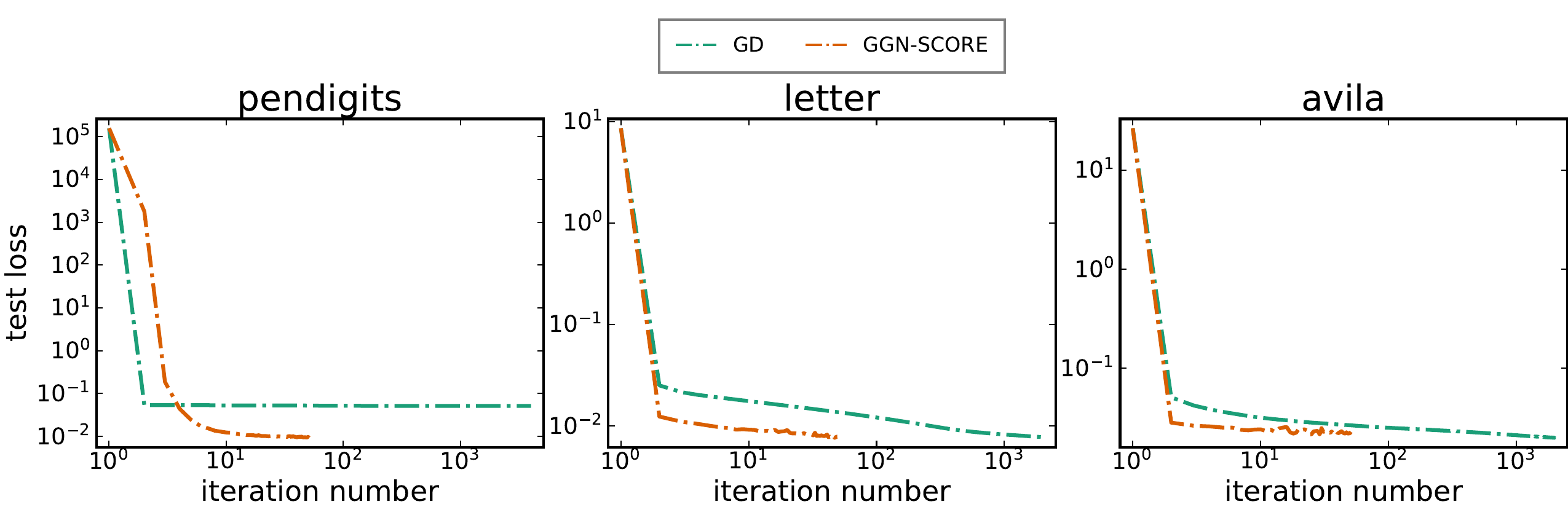}
	\end{subfigure}
	\vfil
	\begin{subfigure}[b]{\textwidth}
		\centering
		\includegraphics[width=0.9\textwidth]{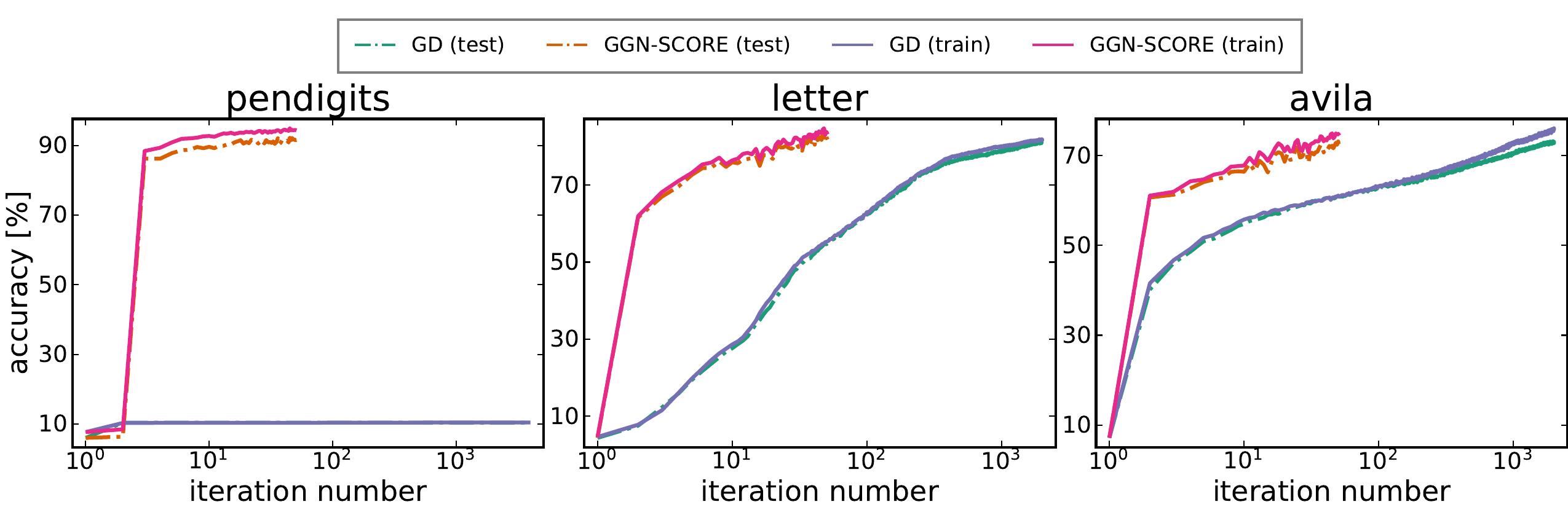}
	\end{subfigure}
	\vfil
	\begin{subfigure}[b]{\textwidth}
		\centering
		\includegraphics[width=0.9\textwidth]{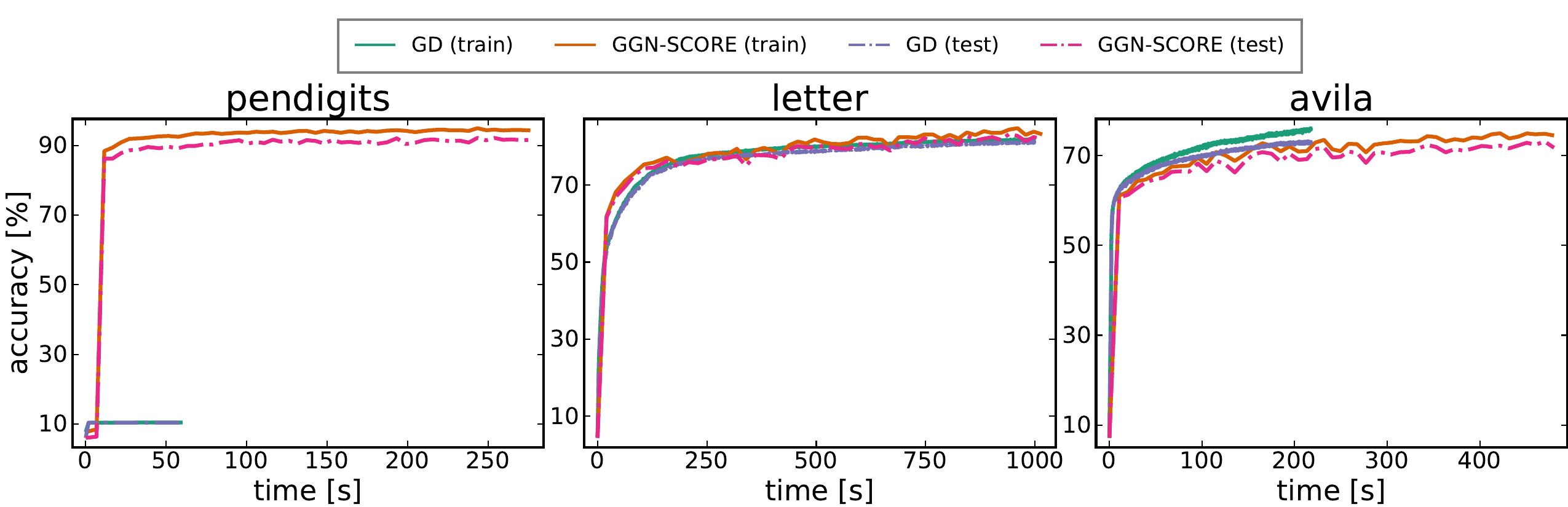}
	\end{subfigure}
	\caption{Test loss and accuracy evolution for GD and GGN-SCORE on pendigits, letter and avila datasets. The regularization function $g(\theta)$ in GGN-SCORE is given by \eqref{eq:g-example} with $\tau=10^{-4}$ and $\mu$ given in Table~\ref{tab:stability}.}
	\label{fig:results-uci}
\end{figure*}
\begin{table}[h!]
	\caption{Summary of UCI datasets used for comparison.}
	\label{tab:datasets}
	\centering
	\begin{tabular}{lcccc}
		\toprule
		&\multicolumn{2}{c}{Num. of samples}    \\
		\cmidrule(r){2-3}
		Dataset & Training & Test & Input dim. & Num. of classes  \\
		\midrule
		pendigits & 7494  & 3498  & 16 & 10   \\
		letter     & 10500 & 5000  & 16 & 26   \\
		avila     & 10430   & 10437 & 11 & 12   \\
		\bottomrule
	\end{tabular}
\end{table}

\begin{table}[h!]
	\caption{Stability of activation measure.}
	\label{tab:stability}
	\centering
	\begin{tabular}{lcccccc}
		\toprule
		&&&\multicolumn{2}{c}{T-I measure ($\%$)}  & \multicolumn{2}{c}{T-I meas. incl. $a_{ij}=0$ ($\%$)}    \\
		\cmidrule(r){4-7}
		Dataset & Batch-size & $\mu$ & GD & GGN-SCORE & GD & GGN-SCORE  \\
		\midrule
		pendigits & 8  & $0.001/\sqrt{n}$  & 50.066 & 52.7331 & 50.1041 & 52.7331   \\
		letter     & 64 & $10/\sqrt{n}$  & 55.9237 & 55.2016 & 55.925 &  55.2112   \\
		avila     & 64   & $10/\sqrt{n}$ & 73.318 & 71.4815 & 73.3189 & 71.4833   \\
		\bottomrule
	\end{tabular}
\end{table}

\end{document}